\documentclass{article}

\PassOptionsToPackage{numbers, sort&compress}{natbib}
\PassOptionsToPackage{table}{xcolor}

\usepackage[dblblindworkshop, final]{neurips_2026}
\workshoptitle{AI for Stochastic Dynamics}

\usepackage[utf8]{inputenc}
\usepackage[T1]{fontenc}
\usepackage{hyperref}
\usepackage{url}
\usepackage{booktabs}
\usepackage{multirow}
\usepackage{amsfonts}
\usepackage{amsmath}
\usepackage{amssymb}
\usepackage{mathtools}
\usepackage{graphicx}
\usepackage{nicefrac}
\usepackage{microtype}
\usepackage{placeins}
\usepackage{xcolor}
\definecolor{physhl}{RGB}{254,243,226}
\definecolor{floorhl}{gray}{0.93}

\newcommand{\dtmax}{\Delta t_{\max}}
\newcommand{\Tlyap}{T_{\lambda}}

\usepackage{amsthm}
\theoremstyle{plain}
\makeatletter
\renewenvironment{proof}[1][\proofname]{\par\pushQED{\qed}\normalfont\topsep6\p@\@plus6\p@\relax\trivlist\item[\hskip\labelsep\bfseries#1\@addpunct{.}]\ignorespaces}{\popQED\endtrivlist\@endpefalse}
\makeatother
\usepackage[nameinlink,capitalize,noabbrev]{cleveref}
\newtheorem{proposition}{Proposition}[section]

\newtheorem{lemma}[proposition]{Lemma}
\newif\ifshowcomments
\showcommentsfalse
\ifshowcomments
\newcommand{\jiachen}[1]{\textcolor{orange}{[Jiachen: #1]}}
\newcommand{\alex}[1]{\textcolor{purple}{[Alex: #1]}}
\newcommand{\claude}[1]{\textcolor{teal}{[Claude: #1]}}

\else
\newcommand{\jiachen}[1]{}
\newcommand{\alex}[1]{}
\newcommand{\claude}[1]{}

\fi
\newcommand{\rell}{\mathcal{E}_{L_2}}

\newcommand{\etrans}{\mathcal{E}_{\mathrm{trans}}}
\newcommand{\edtwo}{\mathcal{E}_{D_2}}

\newcommand{\tabfoot}[1]{%
  \sbox0{\scriptsize\color{black!55}\input{#1}}%
  \ifdim\wd0>0pt \par\vspace{1.5pt}\hbox to \linewidth{\usebox0\hfil}\fi}

\title{Physics is the Best Teacher: Consistency Learning for Time-Invariant Operators of Chaotic Dynamics}

\author{%
  Lufang Chiang$^{1,2}$\thanks{Equal contribution. Correspondence to Anima Anandkumar \texttt{<anima@caltech.edu>}.}\hspace{0.4em}\thanks{Work done during the SURF program at Caltech.} \quad
  Jiachen Yao$^{2}$\footnotemark[1] \quad
  Thomas Y.L. Lin$^{3}$ \quad
  Anima Anandkumar$^{2}$ \\
  $^{1}$National Taiwan University \hspace{0.6em}
  $^{2}$California Institute of Technology \hspace{0.6em}
  $^{3}$University of Washington
}

\usepackage{algorithm}
\usepackage{listings}
\definecolor{codeorange}{RGB}{213,94,0}  %
\definecolor{codefunc}{RGB}{0,90,181}    %
\lstdefinelanguage{PythonFuncColor}{
  language=Python,
  keywordstyle=\color{codefunc}\bfseries,
  commentstyle=\color{black!55}\itshape,
  morecomment=[l][\color{codeorange}\bfseries]{\#\#},
  stringstyle=\color{black},
  showstringspaces=false,
  basicstyle=\ttfamily\small,
  literate=
    {standardize}{{\color{codefunc}standardize}}{1}
    {kmeans}{{\color{codefunc}kmeans}}{1}
    {nearest}{{\color{codefunc}nearest}}{1}
    {cluster_radii}{{\color{codefunc}cluster\_radii}}{1}
    {where}{{\color{codefunc}where}}{1}
    {bincount}{{\color{codefunc}bincount}}{1}
    {mean}{{\color{codefunc}mean}}{1}
    {abs}{{\color{codefunc}abs}}{1}
}

\makeatletter
\newskip\origtextfloatsep\origtextfloatsep=\textfloatsep
\newskip\origfloatsep\origfloatsep=\floatsep
\newskip\origintextsep\origintextsep=\intextsep
\newskip\origabovecaptionskip\origabovecaptionskip=\abovecaptionskip
\newskip\origbelowcaptionskip\origbelowcaptionskip=\belowcaptionskip
\newskip\origparskip\origparskip=\parskip
\newskip\origtopsep\origtopsep=\topsep
\newskip\origitemsep\origitemsep=\itemsep
\newskip\origparsep\origparsep=\parsep
\let\orignormalsize\normalsize
\let\origsection\section
\let\origparagraph\paragraph
\let\orig@listi\@listi
\let\orig@listI\@listI
\newcommand{\restorespacing}{%
  \setlength{\textfloatsep}{\origtextfloatsep}\setlength{\floatsep}{\origfloatsep}\setlength{\intextsep}{\origintextsep}%
  \setlength{\abovecaptionskip}{\origabovecaptionskip}\setlength{\belowcaptionskip}{\origbelowcaptionskip}%
  \setlength{\parskip}{\origparskip}%
  \setlength{\topsep}{\origtopsep}\setlength{\itemsep}{\origitemsep}\setlength{\parsep}{\origparsep}%
  \let\normalsize\orignormalsize\let\section\origsection\let\paragraph\origparagraph
  \let\@listi\orig@listi\let\@listI\orig@listI\normalsize}
\makeatother
\makeatletter\g@addto@macro\normalsize{\setlength{\abovedisplayskip}{4pt plus 1pt minus 1pt}\setlength{\belowdisplayskip}{4pt plus 1pt minus 1pt}\setlength{\abovedisplayshortskip}{2pt}\setlength{\belowdisplayshortskip}{2pt}}\makeatother
\makeatletter
\renewcommand{\section}{\@startsection{section}{1}{\z@}{-1.4ex \@plus -0.3ex \@minus -0.2ex}{0.9ex \@plus 0.2ex \@minus 0.2ex}{\large\bf\raggedright}}
\renewcommand{\paragraph}{\@startsection{paragraph}{4}{\z@}{0.9ex \@plus 0.3ex \@minus 0.2ex}{-1em}{\normalsize\bf}}
\makeatother
\makeatletter\def\@listi{\leftmargin\leftmargini \topsep 2pt \parsep 0pt \itemsep 1pt}\let\@listI\@listi\@listi\makeatother
\begin{document}

\maketitle

\begin{abstract}
Accelerating the prediction of long-term behavior in chaotic systems is crucial in scientific computing.
However, existing methods rely on numerical solvers or autoregressive models that advance one small step at a time, which makes long horizons expensive.
We instead view this problem as learning the system's time-invariant evolution operator, which jumps the state across a large time span in a single evaluation.
To this end, we derive the consistency equations a time-invariant operator must satisfy, with differential and compositional objectives in physical time.
These equations also connect the learned operator to the physics-prescribed instant dynamics, enabling physics embedding in consistency learning.
Across five chaotic systems, we find that physics-distilled consistency makes both short-term trajectories and long-term statistics more accurate.
The learned operator survives temporal extrapolation and requires one-tenth as many evaluations as autoregressive rollout, offering an efficient route to long-term simulation of chaotic dynamics.

\end{abstract}

\section{Introduction}
\label{sec:intro}
\suppressfloats[t]%
\begin{figure}[t]
\centering
\includegraphics[width=\linewidth]{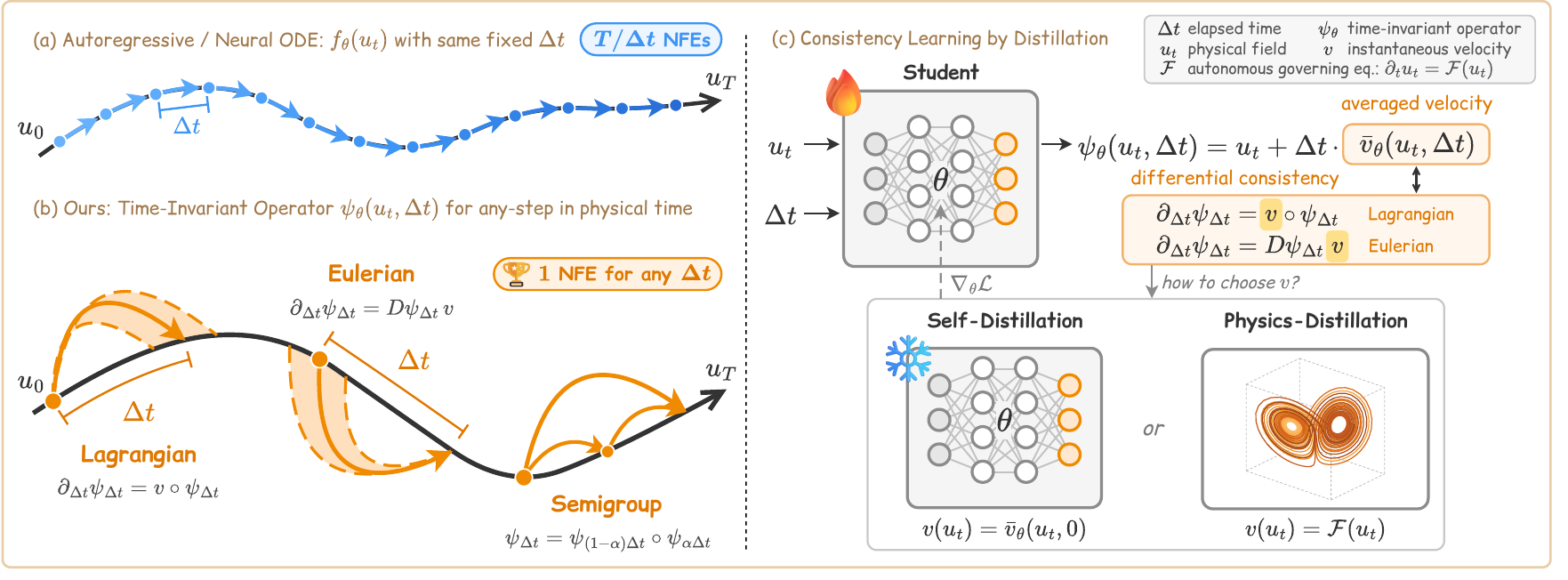}
\vspace{-1em}
\caption{\textbf{Overview.} (a) Autoregressive models and neural ODEs advance in small steps and need many NFEs. (b) We propose consistency conditions for a time-invariant operator that jumps any $\Delta t$ in one step. (c) The model itself or the governing equations supply the velocity in these conditions.}
\label{fig:main}
\end{figure}
Predicting the long-term behavior of chaotic systems is a core task of scientific computing, from weather forecasting \citep{pathak2022fourcastnet, lam2023graphcast, bi2023pangu} to the simulation of fluids \citep{kochkov2021machine, pfaff2021learning}.
Chaotic systems amplify perturbations exponentially \citep{sprott2003chaos, gilpin2021chaos}, so a numerical integrator has to take steps small enough that the discretization error does not grow much along the trajectory, and the cost of simulations over a long horizon is therefore prohibitive.
An operator trained at a fixed step and rolled out autoregressively \citep{li2020fourier, brandstetter2022message, ye2025recurrentneuraloperatorsstable} spends as many function evaluations (NFE) as the solver takes steps, and the error compounds across them.
A neural ODE or a Continuous Flow Operator (CFO) \citep{chen2018neural, hou2025cfolearningcontinuoustimepde} replaces the fixed step with an integrator over a learned instantaneous velocity, but still trades solver precision against many NFEs.

To accelerate the prediction, one can view the problem as learning a time-invariant evolution operator of an autonomous system.
For autonomous dynamics $\partial_t u_t=v(u_t)$, this operator takes a state and an elapsed time to the future state, i.e., $\psi(u_t,\Delta t)=u_{t+\Delta t}$.
Because $\psi$ depends on elapsed rather than absolute time, the same model applies from every starting time and can jump directly to any requested horizon in a single evaluation.
Recent work learns related operators for molecular and physical dynamics \citep{bai2026physics, ripken2026hamiltonian, luo2026recovering}, while one-step generative models use a similar idea to compress many sampling steps into one \citep{song2023consistencymodels, boffi2025flowmapmatchingstochastic, geng2025meanflowsonestepgenerative, wang2025transition}.
Across these settings, the shared computational goal is to replace repeated integration with direct evaluation of a learned operator.
Learning that operator is especially demanding for chaotic dynamics \citep{hess2023generalized, wang2024closure, zhang2025zeroshot}: it must represent strong sensitivity to the input state while remaining consistent with one autonomous law over a continuum of horizons \citep{koppe2019identifying, durstewitz2026position}.
What remains unclear is how to formulate these consistency conditions systematically in physical time, and how much imposing them buys over fitting trajectory data alone.

Fitting trajectory data alone does not ensure that the learned operator follows the same dynamical law across all horizons.
A finite set of trajectories directly constrains the operator only at observed states and sampled time intervals, leaving its behavior at intermediate horizons underdetermined.
An autonomous flow must also satisfy two structural conditions: splitting a time interval into two steps must not change the predicted state \citep{luo2026recovering, shikhman2026semigroup}, and the derivative with respect to the horizon must equal the system's velocity field evaluated at the evolved state \citep{boffi2025consistency}.
Regression alone guarantees neither condition, so even a model that fits the observed transitions need not represent the flow of an autonomous differential equation \citep{shikhman2026semigroup}.
Enforcing these conditions links predictions across horizons and provides supervision beyond the sampled transitions.

The consistency conditions give physics a teaching role in learning time-invariant evolution operators.
When the governing equations are known, their velocity field supplies the instantaneous velocity needed to construct targets for the differential consistency objectives.
Physics thus enters the consistency targets, shifting the methodology from hard constraints and PINN-style equation residuals \citep{raissi2019physics} to the supervision source.
Alternatively, the model can supply its own velocity estimate from its learned ``average'' velocity at zero horizon.
Both choices use the same consistency objectives for distillation \citep{boffi2025consistency},
which lets us ask whether physics is a better teacher than the model itself.

In summary, our contributions are the following.
\begin{itemize}\itemsep 2pt
    \item We explore the design space of consistency learning for time-invariant evolution operators by deriving two differential forms and a compositional form in physical time, each yielding a training objective on the averaged velocity field to promote stability.
    \item We instantiate the same differential consistency objectives with either the model or the governing physics as the teacher. Through controlled comparisons, we show that physics-distillation consistently outperforms self-distillation.
    \item We compare all the design choices on five representative chaotic systems, inside and beyond the training horizon, and find that the Eulerian form with a physics teacher stands out.
\end{itemize}

\section{Consistency Learning for Time-Invariant Operators}
\label{sec:method}
We consider autonomous dynamics $\partial_t u_t = v(u_t)$ on a state space $\mathcal{U}$, where $v$ is the instantaneous velocity field.
The time-invariant evolution operator $\psi : \mathcal{U} \times [0,\dtmax] \to \mathcal{U}$ sends a state and an elapsed time to the state reached after advancing by that much.
Here, we follow \citep{geng2025meanflowsonestepgenerative, boffi2025flowmapmatchingstochastic, ripken2026hamiltonian} and parameterize it through the averaged velocity $\bar{v}$, the mean of $v$ over the elapsed time,
\begin{equation}
\psi_{\theta}(u,\Delta t) \triangleq u + \Delta t\,\bar{v}_{\theta}(u,\Delta t).
\label{eq:parameterization}
\end{equation}
In practice, the goal is to learn the evolution operator $\psi_{\theta}$ from collected trajectories, ensuring we can supervise $\bar{v}_{\theta}$ directly through the secant between two frames,
\begin{equation}
\mathcal{L}_{\mathrm{data}}(\theta)
= \mathbb{E}_{u_t,\,\Delta t}\Big[\big\|\bar{v}_{\theta}(u_t,\Delta t)-\tfrac{u_{t+\Delta t}-u_t}{\Delta t}\big\|^2\Big].
\label{eq:data}
\end{equation}

\paragraph{Consistency of a time-invariant operator.}
For a time-invariant operator $\psi$, the prediction satisfies time-shift invariance and does not depend on absolute time: from the state the trajectory has reached after $s$, advancing by the remaining $\Delta t - s$ lands where advancing by $\Delta t$ from the start does.
Differentiating that statement in $s$ ties the horizon derivative of $\psi$ to the velocity field, in two equivalent ways depending on whether the velocity is read at the start or at the end of the jump, and writing that intermediate state as $\psi(u_t,s)$ instead gives the composition rule.
In terms of the averaged velocity, and following \citep{boffi2025flowmapmatchingstochastic}, the operator satisfies the three consistency conditions below:
\begin{align}
\textit{Eulerian:} && \bar{v}(u_t,\Delta t) &= v(u_t) + \Delta t\big(D\bar{v}(u_t,\Delta t)[v(u_t)] - \partial_{\Delta t}\bar{v}(u_t,\Delta t)\big), \label{eq:eul}\\
\textit{Lagrangian:} && \bar{v}(u_t,\Delta t) &= v\big(u_t + \Delta t\,\bar{v}(u_t,\Delta t)\big) - \Delta t\,\partial_{\Delta t}\bar{v}(u_t,\Delta t), \label{eq:lag}\\
\textit{Semigroup:} && \psi(u_t,\Delta t) &= \psi\big(\psi(u_t,\alpha\Delta t),(1-\alpha)\Delta t\big), \quad \alpha \in [0,1]. \label{eq:semi}
\end{align}
\cref{sec:theory-consistency} states the assumptions and proves all three.
What remains is how to impose them on a learned operator, since fitting \cref{eq:data} constrains $\bar{v}_{\theta}$ only through the endpoint of each jump, without providing any other supervision on the structure in between.

\paragraph{Imposing the conditions.}
For differential consistency, \cref{eq:eul,eq:lag} both call for the instantaneous velocity $v$, and the operator can supply $v$ itself, since the tangent condition $\bar{v}(u,0) = v(u)$ holds by construction (\cref{lem:boundary-condition}), which makes the objective a \emph{self-distillation}.
When the governing equations $\partial_t u_t = v(u_t) = \mathcal{F}(u_t)$ are known, $v$ can be read from $\mathcal{F}$ instead, which makes it a \emph{physics-distillation}.
This gives \emph{Eulerian-self} and \emph{Lagrangian-self} against \emph{Eulerian-phys} and \emph{Lagrangian-phys} for elucidating the velocity teacher, and \emph{Semigroup} as the point in the design space with no velocity guidance at all.
To see which condition is the better thing to learn from, we square the residual of each and add it to the data term,
\begin{equation}
\mathcal{L}(\theta) = \mathcal{L}_{\mathrm{data}}(\theta) + w\,\mathcal{L}_{\mathrm{cons}}(\theta),
\label{eq:total}
\end{equation}
where $\mathcal{L}_{\mathrm{cons}}$ is the residual of the consistency equation and $w$ controls its weight.
Details on the horizon weighting and the stop-gradient are given in \cref{sec:theory-consistency}; \cref{app:ablations} sweeps how sharply that weighting grows with the horizon; \cref{app:connections} relates consistency conditions to previous works.

\section{Experiments}
\label{sec:results}
\begin{table}[t]
\caption{\textbf{Average prediction results over the five systems.}
Five metrics averaged over three seeds and over the five systems by geometric mean, with $D_{\mathrm{H}}$ and $\edtwo$ in $10^{-2}$ and $\etrans$ in $10^{-3}$.
Out-of-distribution means the model is trained on only half of the horizon and rolled out fully.
Shading marks the physics-distilled rows, and bold the best method in each column of a block.
\cref{app:metrics,app:details} define the metrics and the training setup, and \cref{app:tables} gives the table system by system.}
\label{tab:main-avg}
\centering
\small
\setlength{\tabcolsep}{12pt}
\resizebox{\linewidth}{!}{\begin{tabular}{lrcccccccccc}
\toprule
 & & \multicolumn{5}{c}{\emph{in distribution}} & \multicolumn{5}{c}{\emph{out of distribution}} \\
\cmidrule(lr){3-7} \cmidrule(lr){8-12}
Method & NFE & $\rell$ & $D_{\mathrm{stsp}}$ & $D_{\mathrm{H}}$ & $\edtwo$ & $\etrans$ & $\rell$ & $D_{\mathrm{stsp}}$ & $D_{\mathrm{H}}$ & $\edtwo$ & $\etrans$ \\
\midrule
\rowcolor{black!8}\multicolumn{12}{l}{\emph{Evolution operator}} \\
Data & 100 & 0.46 & 0.20 & 3.1 & 2.5 & 1.7 & 0.44 & 0.18 & 2.9 & 1.6 & 1.1 \\
Semigroup & 100 & 0.46 & 0.21 & 3.4 & 3.3 & 1.9 & 0.44 & 0.18 & 2.9 & 1.3 & 1.3 \\
Eulerian-self & 100 & 0.45 & 0.22 & 3.5 & 3.8 & 1.6 & 0.43 & \textbf{0.17} & 2.6 & 1.6 & 1.4 \\
Lagrangian-self & 100 & 0.45 & 0.21 & 3.4 & 3.2 & 1.5 & 0.48 & 0.18 & 2.9 & 2.0 & \textbf{0.94} \\
\rowcolor{physhl}Eulerian-phys & 100 & \textbf{0.28} & \textbf{0.17} & \textbf{2.0} & \textbf{0.94} & \textbf{0.93} & \textbf{0.27} & \textbf{0.17} & 2.1 & \textbf{0.87} & 1.1 \\
\rowcolor{physhl}Lagrangian-phys & 100 & 0.31 & 0.22 & 2.7 & 1.9 & 1.2 & 0.30 & \textbf{0.17} & \textbf{2.0} & 1.4 & 1.1 \\
\midrule
\rowcolor{black!8}\multicolumn{12}{l}{\emph{Baselines}} \\
AR $v$-pred & 1000 & \textbf{0.27} & \textbf{0.17} & \textbf{1.8} & \textbf{1.0} & \textbf{0.48} & \textbf{0.26} & \textbf{0.17} & \textbf{1.6} & \textbf{1.2} & \textbf{0.50} \\
AR $x$-pred (cond.\ $t$) & 1000 & 0.64 & 0.35 & 4.7 & 5.3 & 2.1 & 1.2 & 3.8 & 28 & 35 & 19 \\
AR $x$-pred (uncond.\ $t$) & 1000 & 0.59 & 0.24 & 4.3 & 3.5 & 1.6 & 0.57 & 0.23 & 3.8 & 2.8 & 1.3 \\
CFO & 4000 & 0.30 & \textbf{0.17} & 1.9 & \textbf{1.0} & 0.82 & 1.2 & 1.7 & 12 & 10 & 4.9 \\
\bottomrule
\end{tabular}
}%
\vspace{-0.5em}
\end{table}

\paragraph{Setup.}
We use five chaotic systems from the dysts catalogue \citep{gilpin2021chaos}, Lorenz, Chen, R\"ossler, Thomas and Halvorsen, measuring every metric in their Lyapunov times $\Tlyap$.
Each dataset holds $10{,}000$ trajectories of $5\Tlyap$, detailed in \cref{app:datasets}, and every method shares one MLP with residual connections, with or without conditioning on time, with hyperparameters tuned by Vizier \citep{golovin2017vizier} on the validation split and every cell run over three seeds, detailed in \cref{app:details}.
The baselines are an autoregressive model (AR) that predicts either the next state ($x$-pred) or the velocity ($v$-pred), and CFO \citep{hou2025cfolearningcontinuoustimepde}, a continuous-time model that predicts instantaneous velocity, trained on a polynomial interpolant of the data points.
For a compact overview, \cref{app:methods} lists each method's defining equation, NFE, and average prediction error.
Rolling out five Lyapunov times costs the evolution operator $100$ evaluations, the AR baselines $1000$, and CFO $4000$.
We first train on the full trajectory and evaluate at the endpoint (in distribution), and then train on the first half and evaluate at the endpoint of the full trajectory (out of distribution).
Besides the relative $L_2$ error for short-term accuracy, we report long-term metrics on the distribution the model settles into, detailed in \cref{app:metrics}.

\paragraph{Choice of teacher.}
\cref{tab:main-avg,tab:5lyap,tab:1lyap} compare the five consistency objectives against the data-only control.
The physics-distillation objectives improve on the control by a factor of $1.6$ at $5\Tlyap$ and a factor of $3$ at $\Tlyap$, and the same holds for the long-term statistics, while self-distillation stays level with it.
The difference lies in how each teacher constrains the operator.
The consistency conditions in \cref{sec:method} require the operator to be the flow of some velocity field, but self-distillation velocity does not strictly enforce that the velocity matches the physical one.
Substituting the governing equations selects a single member of that family, and constrains the operator at every horizon rather than only at the frames the data samples.
\cref{fig:dose} is consistent with this reading, as the physics curves fall below the control while the weight grows and the self-distilled ones rise above it.
The operator also needs $10\times$ fewer evaluations than the best baseline, and the two baselines that use absolute time break down out of distribution, while the same network without the time input does not.
\cref{fig:qual-main} shows one trajectory past the training horizon, where the operator trained with physics-distillation stays on the truth and CFO leaves it first.

\paragraph{Choice of form.}
Under a physics teacher, Eulerian is ahead of Lagrangian, though the two stay close and most cells of \cref{tab:5lyap} do not separate them, and what differs between the two is where $\mathcal{F}$ is evaluated.
Eulerian evaluates $\mathcal{F}$ at the data state, so its target is fixed by the data and does not move while the operator trains.
Lagrangian evaluates $\mathcal{F}$ at the operator's own predicted endpoint, so its target is bootstrapped on the current prediction and follows the model's own error, which may be the harder signal to learn from.
The choice of form matters far less than the choice of teacher.

\begin{table}[t]
\caption{\textbf{Five-Lyapunov-time accuracy.}
$\rell(5\Tlyap)$ on each of the five systems, mean $\pm$ std over three seeds, with each method at its native number of evaluations, stated in the NFE column.
In-distribution means every model is trained on the full horizon, and out-of-distribution means it is trained on the first half and rolled out to the full horizon.
Shading marks the physics-distilled objectives and bold the best method within each block.
\cref{app:tables} gives the same table at one Lyapunov time.}
\label{tab:5lyap}
\centering
\small
\setlength{\tabcolsep}{3pt}
\resizebox{\linewidth}{!}{\begin{tabular}{lrcccccccccc}
\toprule
 & & \multicolumn{2}{c}{Lorenz} & \multicolumn{2}{c}{Chen} & \multicolumn{2}{c}{R\"ossler} & \multicolumn{2}{c}{Thomas} & \multicolumn{2}{c}{Halvorsen} \\
\cmidrule(lr){3-4} \cmidrule(lr){5-6} \cmidrule(lr){7-8} \cmidrule(lr){9-10} \cmidrule(lr){11-12}
Method & NFE & in-dist & OOD & in-dist & OOD & in-dist & OOD & in-dist & OOD & in-dist & OOD \\
\midrule
\rowcolor{black!8}\multicolumn{12}{l}{\emph{Evolution operator}} \\
Data & 100 & 0.32{\scriptsize\color{black!55}$\pm$0.05} & 0.25{\scriptsize\color{black!55}$\pm$0.04} & 0.26{\scriptsize\color{black!55}$\pm$0.03} & 0.27{\scriptsize\color{black!55}$\pm$0.03} & 0.53{\scriptsize\color{black!55}$\pm$0.06} & 0.53{\scriptsize\color{black!55}$\pm$0.02} & 0.90{\scriptsize\color{black!55}$\pm$0.04} & 0.92{\scriptsize\color{black!55}$\pm$0.04} & 0.56{\scriptsize\color{black!55}$\pm$0.09} & 0.52{\scriptsize\color{black!55}$\pm$0.01} \\
Semigroup & 100 & 0.30{\scriptsize\color{black!55}$\pm$0.06} & 0.24{\scriptsize\color{black!55}$\pm$0.03} & 0.27{\scriptsize\color{black!55}$\pm$0.02} & 0.29{\scriptsize\color{black!55}$\pm$0.04} & 0.54{\scriptsize\color{black!55}$\pm$0.05} & 0.54{\scriptsize\color{black!55}$\pm$0.01} & 0.88{\scriptsize\color{black!55}$\pm$0.09} & 0.90{\scriptsize\color{black!55}$\pm$0.07} & 0.55{\scriptsize\color{black!55}$\pm$0.09} & 0.52{\scriptsize\color{black!55}$\pm$0.03} \\
Eulerian-self & 100 & 0.27{\scriptsize\color{black!55}$\pm$0.06} & 0.24{\scriptsize\color{black!55}$\pm$0.01} & 0.27{\scriptsize\color{black!55}$\pm$0.03} & 0.30{\scriptsize\color{black!55}$\pm$0.04} & 0.56{\scriptsize\color{black!55}$\pm$0.02} & 0.50{\scriptsize\color{black!55}$\pm$0.04} & 0.89{\scriptsize\color{black!55}$\pm$0.07} & 0.83{\scriptsize\color{black!55}$\pm$0.05} & 0.50{\scriptsize\color{black!55}$\pm$0.02} & 0.49{\scriptsize\color{black!55}$\pm$0.02} \\
Lagrangian-self & 100 & 0.22{\scriptsize\color{black!55}$\pm$0.03} & 0.31{\scriptsize\color{black!55}$\pm$0.14} & 0.31{\scriptsize\color{black!55}$\pm$0.05} & 0.32{\scriptsize\color{black!55}$\pm$0.06} & 0.56{\scriptsize\color{black!55}$\pm$0.06} & 0.54{\scriptsize\color{black!55}$\pm$0.03} & 0.85{\scriptsize\color{black!55}$\pm$0.15} & 0.87{\scriptsize\color{black!55}$\pm$0.04} & 0.55{\scriptsize\color{black!55}$\pm$0.07} & 0.56{\scriptsize\color{black!55}$\pm$0.02} \\
\rowcolor{physhl}Eulerian-phys & 100 & \textbf{0.14{\scriptsize\color{black!55}$\pm$0.06}} & 0.13{\scriptsize\color{black!55}$\pm$0.03} & \textbf{0.15{\scriptsize\color{black!55}$\pm$0.01}} & \textbf{0.13{\scriptsize\color{black!55}$\pm$0.02}} & \textbf{0.45{\scriptsize\color{black!55}$\pm$0.03}} & \textbf{0.44{\scriptsize\color{black!55}$\pm$0.03}} & 0.64{\scriptsize\color{black!55}$\pm$0.07} & \textbf{0.61{\scriptsize\color{black!55}$\pm$0.06}} & \textbf{0.31{\scriptsize\color{black!55}$\pm$0.01}} & \textbf{0.32{\scriptsize\color{black!55}$\pm$0.02}} \\
\rowcolor{physhl}Lagrangian-phys & 100 & \textbf{0.14{\scriptsize\color{black!55}$\pm$0.01}} & \textbf{0.12{\scriptsize\color{black!55}$\pm$0.02}} & \textbf{0.15{\scriptsize\color{black!55}$\pm$0.01}} & 0.17{\scriptsize\color{black!55}$\pm$0.04} & 0.52{\scriptsize\color{black!55}$\pm$0.06} & 0.51{\scriptsize\color{black!55}$\pm$0.04} & \textbf{0.62{\scriptsize\color{black!55}$\pm$0.02}} & 0.62{\scriptsize\color{black!55}$\pm$0.05} & 0.39{\scriptsize\color{black!55}$\pm$0.03} & 0.37{\scriptsize\color{black!55}$\pm$0.05} \\
\midrule
\rowcolor{black!8}\multicolumn{12}{l}{\emph{Baselines}} \\
AR $v$-pred & 1000 & \textbf{0.11{\scriptsize\color{black!55}$\pm$0.03}} & \textbf{0.094{\scriptsize\color{black!55}$\pm$0.023}} & \textbf{0.19{\scriptsize\color{black!55}$\pm$0.05}} & \textbf{0.21{\scriptsize\color{black!55}$\pm$0.04}} & \textbf{0.27{\scriptsize\color{black!55}$\pm$0.06}} & \textbf{0.30{\scriptsize\color{black!55}$\pm$0.07}} & 0.82{\scriptsize\color{black!55}$\pm$0.09} & \textbf{0.78{\scriptsize\color{black!55}$\pm$0.06}} & 0.27{\scriptsize\color{black!55}$\pm$0.02} & \textbf{0.25{\scriptsize\color{black!55}$\pm$0.02}} \\
AR $x$-pred (cond.\ $t$) & 1000 & 0.50{\scriptsize\color{black!55}$\pm$0.07} & 0.75{\scriptsize\color{black!55}$\pm$0.08} & 0.45{\scriptsize\color{black!55}$\pm$0.07} & 1.0{\scriptsize\color{black!55}$\pm$0.5} & 0.61{\scriptsize\color{black!55}$\pm$0.01} & 0.71{\scriptsize\color{black!55}$\pm$0.11} & 1.1{\scriptsize\color{black!55}$\pm$0.1} & 2.1{\scriptsize\color{black!55}$\pm$0.6} & 0.68{\scriptsize\color{black!55}$\pm$0.12} & 1.8{\scriptsize\color{black!55}$\pm$0.4} \\
AR $x$-pred (uncond.\ $t$) & 1000 & 0.42{\scriptsize\color{black!55}$\pm$0.12} & 0.39{\scriptsize\color{black!55}$\pm$0.12} & 0.47{\scriptsize\color{black!55}$\pm$0.03} & 0.45{\scriptsize\color{black!55}$\pm$0.07} & 0.49{\scriptsize\color{black!55}$\pm$0.05} & 0.57{\scriptsize\color{black!55}$\pm$0.02} & 1.1{\scriptsize\color{black!55}$\pm$0.1} & 1.1{\scriptsize\color{black!55}$\pm$0.0} & 0.72{\scriptsize\color{black!55}$\pm$0.12} & 0.57{\scriptsize\color{black!55}$\pm$0.03} \\
CFO & 4000 & 0.14{\scriptsize\color{black!55}$\pm$0.02} & 0.55{\scriptsize\color{black!55}$\pm$0.06} & \textbf{0.19{\scriptsize\color{black!55}$\pm$0.04}} & 0.30{\scriptsize\color{black!55}$\pm$0.03} & 0.47{\scriptsize\color{black!55}$\pm$0.05} & 8.7{\scriptsize\color{black!55}$\pm$5.8} & \textbf{0.80{\scriptsize\color{black!55}$\pm$0.07}} & 2.1{\scriptsize\color{black!55}$\pm$0.5} & \textbf{0.25{\scriptsize\color{black!55}$\pm$0.03}} & 0.88{\scriptsize\color{black!55}$\pm$0.08} \\
\bottomrule
\end{tabular}
}%
\end{table}

\begin{figure}[t]
\vspace{1em}
\centering
\begin{minipage}[t]{0.4083\linewidth}
\centering
\includegraphics[width=\linewidth]{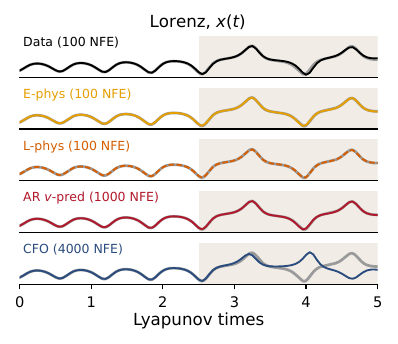}%
\vspace{-1.8em}
\caption{\textbf{Temporal extrapolation.}
One test trajectory's $x$-coordinate on Lorenz, the truth in gray and one method per row.
Every model is trained on the first half only, so no model saw the shaded half.}
\label{fig:qual-main}
\end{minipage}\hfill
\begin{minipage}[t]{0.5637\linewidth}
\centering
\raisebox{4.92pt}[\height][\depth]{\includegraphics[width=\linewidth]{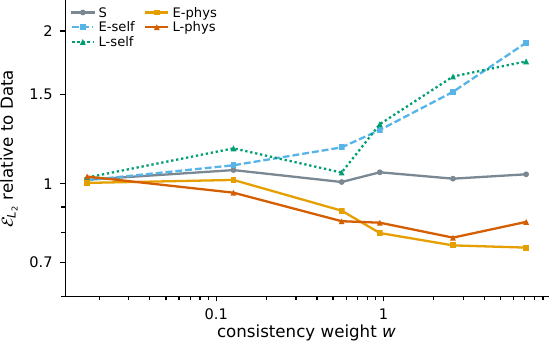}}%
\vspace{-1.8em}
\caption{\textbf{Consistency strength.}
Validation relative L2 error against the consistency weight, as a fraction of the data-only model's error, averaged over the five systems.
E-phys and L-phys are fed the governing equations; the others are fed the model's own predictions.}
\label{fig:dose}
\end{minipage}
\vspace{-0.5em}
\end{figure}

\FloatBarrier
\section{Conclusion}
\label{sec:discussion}
In this work, we learn the time-invariant evolution operator of an autonomous system from the consistency conditions it must satisfy, with either the model itself or the governing equations as the teacher.
Across five chaotic systems, self-distillation carries little supervision and lands on the data-only control, while physics-distillation improves on it in and beyond the training horizon, where the baselines that read absolute time break down.
While we lay out this design space, we do not test every choice in it, and leave a direct test of what separates the two differential forms to future work.

\newpage
\section*{Declaration of LLM Usage}
\label{sec:llm}
Large language models (LLMs) were used to assist with writing and formatting, to examine proofs, and to support code development.
All LLM-assisted text, proof assessments, and code were subsequently reviewed by the authors.
LLMs did not contribute to the development of the core methods in this research.

\section*{Acknowledgments and Disclosure of Funding}
\label{sec:acknowledgments}
Anima Anandkumar is supported in part by Bren endowed chair, ONR (MURI grant N00014-23-1-2654), and the AI2050 senior fellow program at Schmidt Sciences. Lufang Chiang is supported in part by the Babar SURF Fellowship. 
Jiachen Yao is supported in part by the Naren and Vinita Gupta Fellowship. 

\bibliographystyle{unsrtnat}
\bibliography{reference}

@inproceedings{li2020fourier,
  author    = {Zongyi Li and Nikola Borislavov Kovachki and Kamyar Azizzadenesheli and
               Burigede Liu and Kaushik Bhattacharya and Andrew M. Stuart and Anima Anandkumar},
  title     = {Fourier Neural Operator for Parametric Partial Differential Equations},
  booktitle = {International Conference on Learning Representations},
  year      = {2021},
  url       = {https://openreview.net/forum?id=c8P9NQVtmnO}
}

@article{kovachki2023neural,
  title   = {Neural operator: Learning maps between function spaces with applications to {PDEs}},
  author  = {Kovachki, Nikola and Li, Zongyi and Liu, Burigede and Azizzadenesheli, Kamyar and Bhattacharya, Kaushik and Stuart, Andrew and Anandkumar, Anima},
  journal = {Journal of Machine Learning Research},
  volume  = {24},
  number  = {89},
  pages   = {1--97},
  year    = {2023},
  url = {https://arxiv.org/abs/2108.08481}
}

@inproceedings{hou2025cfolearningcontinuoustimepde,
  title     = {{CFO}: Learning Continuous-Time {PDE} Dynamics via Flow-Matched Neural Operators},
  author    = {Hou, Xianglong and Huang, Xinquan and Perdikaris, Paris},
  booktitle = {International Conference on Learning Representations},
  year      = {2026},
  url       = {https://openreview.net/forum?id=IQhaeSzyup}
}

@misc{ye2025recurrentneuraloperatorsstable,
      title={Recurrent Neural Operators: Stable Long-Term {PDE} Prediction}, 
      author={Zaijun Ye and Chen-Song Zhang and Wansheng Wang},
      year={2025},
      eprint={2505.20721},
      archivePrefix={arXiv},
      primaryClass={cs.LG},
      url={https://arxiv.org/abs/2505.20721}, 

}

@inproceedings{geng2025meanflowsonestepgenerative,
  author    = {Geng, Zhengyang and Deng, Mingyang and Bai, Xingjian and Kolter, Zico and He, Kaiming},
  title     = {Mean Flows for One-step Generative Modeling},
  booktitle = {Advances in Neural Information Processing Systems},
  volume    = {38},
  pages     = {84032--84054},
  year      = {2025},
  doi       = {10.52202/085713-2534},
  url       = {https://proceedings.neurips.cc/paper_files/paper/2025/hash/6d13e085b79d454da5910e4ca82a3d9d-Abstract-Conference.html}
}

@inproceedings{song2023consistencymodels,
  title     = {Consistency Models},
  author    = {Song, Yang and Dhariwal, Prafulla and Chen, Mark and Sutskever, Ilya},
  booktitle = {Proceedings of the 40th International Conference on Machine Learning},
  series    = {Proceedings of Machine Learning Research},
  volume    = {202},
  pages     = {32211--32252},
  year      = {2023},
  publisher = {PMLR},
  url       = {https://proceedings.mlr.press/v202/song23a.html}
}

@article{boffi2025flowmapmatchingstochastic,
  author  = {Nicholas M. Boffi and Michael S. Albergo and Eric Vanden{-}Eijnden},
  title   = {Flow map matching with stochastic interpolants: {A} mathematical framework for consistency models},
  journal = {Transactions on Machine Learning Research},
  year    = {2025},
  url     = {https://openreview.net/forum?id=cqDH0e6ak2}
}

@article{lloyd1982least,
  title   = {Least Squares Quantization in {PCM}},
  author  = {Lloyd, Stuart P.},
  journal = {IEEE Transactions on Information Theory},
  volume  = {28},
  number  = {2},
  pages   = {129--137},
  year    = {1982},
  url = {https://doi.org/10.1109/TIT.1982.1056489}
}

@inproceedings{wang2024closure,
  title     = {Beyond Closure Models: Learning Chaotic-Systems via Physics-Informed Neural Operators},
  author    = {Wang, Chuwei and Berner, Julius and Li, Zongyi and Zhou, Di and Wang, Jiayun and Bae, Jane and Anandkumar, Anima},
  booktitle = {D3S3: Data-driven and Differentiable Simulations, Surrogates, and Solvers Workshop @ NeurIPS},
  year      = {2024},
  url = {https://arxiv.org/abs/2408.05177v4}
}

@book{hairer1993solving,
  title     = {Solving Ordinary Differential Equations I: Nonstiff Problems},
  author    = {Hairer, Ernst and N{\o}rsett, Syvert P. and Wanner, Gerhard},
  series    = {Springer Series in Computational Mathematics},
  volume    = {8},
  edition   = {2nd},
  publisher = {Springer},
  address   = {Berlin, Heidelberg},
  year      = {1993},
  doi       = {10.1007/978-3-540-78862-1},
  url = {https://doi.org/10.1007/978-3-540-78862-1}
}

@inproceedings{gilpin2021chaos,
  title={Chaos as an interpretable benchmark for forecasting and data-driven modelling},
  author={Gilpin, William},
  booktitle={Advances in Neural Information Processing Systems, Datasets and Benchmarks Track},
  year={2021},
  url = {https://arxiv.org/abs/2110.05266}
}

@inproceedings{zhang2025zeroshot,
  title={Zero-shot forecasting of chaotic systems},
  author={Zhang, Yuanzhao and Gilpin, William},
  booktitle={International Conference on Learning Representations},
  year={2025},
  url = {https://arxiv.org/abs/2409.15771}
}

@article{prinz2011markov,
  title={Markov models of molecular kinetics: Generation and validation},
  author={Prinz, Jan-Hendrik and Wu, Hao and Sarich, Marco and Keller, Bettina and Senne, Martin and Held, Martin and Chodera, John D. and Sch{\"u}tte, Christof and No{\'e}, Frank},
  journal={The Journal of Chemical Physics},
  volume={134},
  pages={174105},
  year={2011},
  url = {https://doi.org/10.1063/1.3565032}
}

@book{ulam1960collection,
  title={A Collection of Mathematical Problems},
  author={Ulam, Stanislaw M.},
  publisher={Interscience Publishers},
  year={1960}

}

@article{lorenz1963deterministic,
      title={Deterministic Nonperiodic Flow},
      author={Edward N. Lorenz},
      journal={Journal of the Atmospheric Sciences},
      volume={20},
      number={2},
      pages={130--141},
      year={1963},
  url = {https://doi.org/10.1175/1520-0469(1963)020<0130:DNF>2.0.CO;2}
}

@misc{bai2026physics,
      title={Physics-Informed Neural {PDE} Solvers via Spatio-Temporal {MeanFlow}},
      author={Hanru Bai and Yuncheng Zhou and Difan Zou},
      year={2026},
      eprint={2605.08915},
      archivePrefix={arXiv},
      primaryClass={cs.LG},
      url={https://arxiv.org/abs/2605.08915},

}

@inproceedings{ripken2026hamiltonian,
  title         = {Learning {H}amiltonian Flow Maps: Mean Flow Consistency for Large-Timestep Molecular Dynamics},
  author        = {Ripken, Winfried and Plainer, Michael and Lied, Gregor and Frank, Thorben and Unke, Oliver T. and Chmiela, Stefan and No{\'e}, Frank and M{\"u}ller, Klaus-Robert},
  booktitle     = {Proceedings of the 43rd International Conference on Machine Learning},
  publisher     = {PMLR},
  year          = {2026},
  eprint        = {2601.22123},
  archivePrefix = {arXiv},
  primaryClass  = {cs.LG}
}

@misc{luo2026recovering,
      title={Recovering Physical Dynamics from Discrete Observations via Intrinsic Differential Consistency},
      author={Yuxiang Luo and Andrew Perrault},
      year={2026},
      eprint={2605.08454},
      archivePrefix={arXiv},
      primaryClass={cs.LG},
      url={https://arxiv.org/abs/2605.08454},

}

@inproceedings{wang2025transition,
  title     = {Transition Models: Rethinking the Generative Learning Objective},
  author    = {Wang, Zidong and Zhang, Yiyuan and Yue, Xiaoyu and Yue, Xiangyu and Li, Yangguang and Ouyang, Wanli and Bai, Lei},
  booktitle = {Proceedings of the IEEE/CVF Conference on Computer Vision and Pattern Recognition (CVPR)},
  year      = {2026},
  pages     = {29178--29189},
  url       = {https://openaccess.thecvf.com/content/CVPR2026/html/Wang_Transition_Models_Rethinking_the_Generative_Learning_Objective_CVPR_2026_paper.html}
}

@inproceedings{chen2018neural,
  author    = {Chen, Ricky T. Q. and Rubanova, Yulia and Bettencourt, Jesse and Duvenaud, David K.},
  title     = {Neural Ordinary Differential Equations},
  booktitle = {Advances in Neural Information Processing Systems},
  volume    = {31},
  pages     = {6572--6583},
  publisher = {Curran Associates, Inc.},
  year      = {2018},
  url       = {https://proceedings.neurips.cc/paper_files/paper/2018/file/69386f6bb1dfed68692a24c8686939b9-Paper.pdf}
}

@article{lam2023graphcast,
  author  = {Lam, Remi and Sanchez-Gonzalez, Alvaro and Willson, Matthew and
             Wirnsberger, Peter and Fortunato, Meire and Alet, Ferran and
             Ravuri, Suman and Ewalds, Timo and Eaton-Rosen, Zach and Hu, Weihua and
             Merose, Alexander and Hoyer, Stephan and Holland, George and
             Vinyals, Oriol and Stott, Jacklynn and Pritzel, Alexander and
             Mohamed, Shakir and Battaglia, Peter},
  title   = {Learning skillful medium-range global weather forecasting},
  journal = {Science},
  volume  = {382},
  number  = {6677},
  pages   = {1416--1421},
  year    = {2023},
  doi     = {10.1126/science.adi2336}
}

@misc{pathak2022fourcastnet,
      title={{FourCastNet}: A Global Data-driven High-resolution Weather Model using Adaptive {F}ourier Neural Operators},
      author={Jaideep Pathak and Shashank Subramanian and Peter Harrington and Sanjeev Raja and Ashesh Chattopadhyay and Morteza Mardani and Thorsten Kurth and David Hall and Zongyi Li and Kamyar Azizzadenesheli and Pedram Hassanzadeh and Karthik Kashinath and Animashree Anandkumar},
      year={2022},
      eprint={2202.11214},
      archivePrefix={arXiv},
      primaryClass={physics.ao-ph},
      url={https://arxiv.org/abs/2202.11214},

}

@article{kochkov2021machine,
  author  = {Kochkov, Dmitrii and Smith, Jamie A. and Alieva, Ayya and Wang, Qing and Brenner, Michael P. and Hoyer, Stephan},
  title   = {Machine learning--accelerated computational fluid dynamics},
  journal = {Proceedings of the National Academy of Sciences},
  volume  = {118},
  number  = {21},
  pages   = {e2101784118},
  year    = {2021},
  doi     = {10.1073/pnas.2101784118}
}

@article{bi2023pangu,
  author  = {Bi, Kaifeng and Xie, Lingxi and Zhang, Hengheng and Chen, Xin and Gu, Xiaotao and Tian, Qi},
  title   = {Accurate medium-range global weather forecasting with {3D} neural networks},
  journal = {Nature},
  volume  = {619},
  number  = {7970},
  pages   = {533--538},
  year    = {2023},
  doi     = {10.1038/s41586-023-06185-3}
}

@inproceedings{pfaff2021learning,
  author    = {Tobias Pfaff and Meire Fortunato and Alvaro Sanchez-Gonzalez and Peter W. Battaglia},
  title     = {Learning Mesh-Based Simulation with Graph Networks},
  booktitle = {International Conference on Learning Representations},
  year      = {2021},
  url       = {https://openreview.net/forum?id=roNqYL0\_XP}
}

@inproceedings{brandstetter2022message,
  title     = {Message Passing Neural {PDE} Solvers},
  author    = {Brandstetter, Johannes and Worrall, Daniel E. and Welling, Max},
  booktitle = {International Conference on Learning Representations},
  year      = {2022},
  url       = {https://openreview.net/forum?id=vSix3HPYKSU},
}

@article{koppe2019identifying,
      title={Identifying nonlinear dynamical systems via generative recurrent neural networks with applications to {fMRI}},
      author={Koppe, Georgia and Toutounji, Hazem and Kirsch, Peter and Lis, Stefanie and Durstewitz, Daniel},
      journal={{PLOS} Computational Biology},
      volume={15},
      number={8},
      pages={e1007263},
      year={2019},
      doi={10.1371/journal.pcbi.1007263},
      eprint={1902.07186},
      archivePrefix={arXiv},
  url = {https://arxiv.org/abs/1902.07186}
}

@inproceedings{hess2023generalized,
      title={Generalized Teacher Forcing for Learning Chaotic Dynamics},
      author={Hess, Florian and Monfared, Zahra and Brenner, Manuel and Durstewitz, Daniel},
      booktitle={Proceedings of the 40th International Conference on Machine Learning},
      series={Proceedings of Machine Learning Research},
      volume={202},
      pages={13017--13049},
      year={2023},
      publisher={PMLR},
      eprint={2306.04406},
      archivePrefix={arXiv},
      primaryClass={cs.LG},
  url = {https://arxiv.org/abs/2306.04406}
}

@inproceedings{mikhaeil2022difficulty,
      title={On the difficulty of learning chaotic dynamics with {RNNs}},
      author={Mikhaeil, Jonas M. and Monfared, Zahra and Durstewitz, Daniel},
      booktitle={Advances in Neural Information Processing Systems},
      volume={35},
      year={2022},
      eprint={2110.07238},
      archivePrefix={arXiv},
      primaryClass={cs.LG},
  url = {https://arxiv.org/abs/2110.07238}
}

@article{grassberger1983measuring,
  title={Measuring the strangeness of strange attractors},
  author={Grassberger, Peter and Procaccia, Itamar},
  journal={Physica D: Nonlinear Phenomena},
  volume={9},
  number={1-2},
  pages={189--208},
  year={1983},
  doi={10.1016/0167-2789(83)90298-1},
  url = {https://doi.org/10.1016/0167-2789(83)90298-1}
}

@article{datseris2023estimating,
  title={Estimating fractal dimensions: A comparative review and open source implementations},
  author={Datseris, George and Kottlarz, Inga and Braun, Anton P. and Parlitz, Ulrich},
  journal={Chaos: An Interdisciplinary Journal of Nonlinear Science},
  volume={33},
  number={10},
  pages={102101},
  year={2023},
  doi={10.1063/5.0160394},
  url = {https://arxiv.org/abs/2109.05937}
}

@article{kutta1901beitrag,
  title={Beitrag zur n{\"a}herungsweisen {I}ntegration totaler {D}ifferentialgleichungen},
  author={Kutta, Wilhelm},
  journal={Zeitschrift f{\"u}r Mathematik und Physik},
  volume={46},
  pages={435--453},
  year={1901}

}

@article{rossler1976equation,
  title={An equation for continuous chaos},
  author={R{\"o}ssler, Otto E.},
  journal={Physics Letters A},
  volume={57},
  number={5},
  pages={397--398},
  year={1976},
  url = {https://doi.org/10.1016/0375-9601(76)90101-8}
}

@article{chen1999yet,
  title={Yet another chaotic attractor},
  author={Chen, Guanrong and Ueta, Tetsushi},
  journal={International Journal of Bifurcation and Chaos},
  volume={9},
  number={7},
  pages={1465--1466},
  year={1999},
  url = {https://doi.org/10.1142/S0218127499001024}
}

@article{thomas1999deterministic,
  title={Deterministic chaos seen in terms of feedback circuits: Analysis, synthesis, ``labyrinth chaos''},
  author={Thomas, Ren{\'e}},
  journal={International Journal of Bifurcation and Chaos},
  volume={9},
  number={10},
  pages={1889--1905},
  year={1999},
  url = {https://doi.org/10.1142/S0218127499001383}
}

@book{sprott2003chaos,
  title={Chaos and Time-Series Analysis},
  author={Sprott, Julien Clinton},
  year={2003},
  publisher={Oxford University Press}

}

@article{sprott2007labyrinth,
  title={Labyrinth chaos},
  author={Sprott, Julien Clinton and Chlouverakis, Konstantinos E.},
  journal={International Journal of Bifurcation and Chaos},
  volume={17},
  number={6},
  pages={2097--2108},
  year={2007},
  url = {https://doi.org/10.1142/S0218127407018245}
}

@book{sprott2010elegant,
  title={Elegant Chaos: Algebraically Simple Chaotic Flows},
  author={Sprott, Julien Clinton},
  year={2010},
  publisher={World Scientific},
  url = {https://doi.org/10.1142/7183}
}

@article{eftekhari2012numerical,
  title={Numerical simulation of chaotic dynamical systems by the method of differential quadrature},
  author={Eftekhari, S. A. and Jafari, A. A.},
  journal={Scientia Iranica},
  volume={19},
  number={5},
  pages={1299--1315},
  year={2012},
  doi={10.1016/j.scient.2012.08.003},
  url = {https://doi.org/10.1016/j.scient.2012.08.003}
}

@article{noorani2007comparing,
  title={Comparing numerical methods for the solutions of the {C}hen system},
  author={Noorani, M. S. M. and Hashim, I. and Ahmad, R. and Bakar, S. A. and Ismail, E. S. and Zakaria, A. M.},
  journal={Chaos, Solitons \& Fractals},
  volume={32},
  number={4},
  pages={1296--1304},
  year={2007},
  publisher={Elsevier},
  url = {https://doi.org/10.1016/j.chaos.2005.12.036}
}

@inproceedings{kingma2015adam,
  title={Adam: A method for stochastic optimization},
  author={Kingma, Diederik P. and Ba, Jimmy},
  booktitle={International Conference on Learning Representations},
  year={2015},
  url = {https://arxiv.org/abs/1412.6980}
}

@inproceedings{golovin2017vizier,
  title={Google {V}izier: A service for black-box optimization},
  author={Golovin, Daniel and Solnik, Benjamin and Moitra, Subhodeep and Kochanski, Greg and Karro, John and Sculley, D.},
  booktitle={Proceedings of the 23rd ACM SIGKDD International Conference on Knowledge Discovery and Data Mining},
  pages={1487--1495},
  year={2017},
  url = {https://doi.org/10.1145/3097983.3098043}
}

@inproceedings{song2022vizier,
  title={Open source {V}izier: Distributed infrastructure and {API} for reliable and flexible blackbox optimization},
  author={Song, Xingyou and Perel, Sagi and Lee, Chansoo and Kochanski, Greg and Golovin, Daniel},
  booktitle={International Conference on Automated Machine Learning},
  year={2022},
  url = {https://arxiv.org/abs/2207.13676}
}

@inproceedings{peebles2023scalable,
  title={Scalable diffusion models with transformers},
  author={Peebles, William and Xie, Saining},
  booktitle={IEEE/CVF International Conference on Computer Vision},
  year={2023},
  url = {https://arxiv.org/abs/2212.09748}
}

@inproceedings{boffi2025consistency,
  author    = {Nicholas M. Boffi and Michael S. Albergo and Eric Vanden-Eijnden},
  title     = {How to build a consistency model: Learning flow maps via self-distillation},
  booktitle = {Advances in Neural Information Processing Systems},
  volume    = {38},
  publisher = {Curran Associates, Inc.},
  year      = {2025},
  doi       = {10.52202/085713-1121},
  url       = {https://proceedings.neurips.cc/paper_files/paper/2025/hash/2ff3605c9e442d55156d99fe4f5f228d-Abstract-Conference.html}
}

@inproceedings{durstewitz2026position,
  title     = {Position: Why a Dynamical Systems Perspective is Needed to Advance Time Series Modeling},
  author    = {Durstewitz, Daniel and Hemmer, Christoph J{\"u}rgen and Hess, Florian and Doll, Charlotte Ricarda and Eisenmann, Lukas},
  booktitle = {Proceedings of the 43rd International Conference on Machine Learning},
  year      = {2026},
  url       = {https://openreview.net/forum?id=h2R3VYWpBm},
}

@inproceedings{shikhman2026semigroup,
  author        = {Shikhman, Lennon J.},
  title         = {Semigroup Consistency as a Diagnostic for Learned Physics Simulators},
  booktitle     = {ICML 2026 Workshop on AI for Physics (AI4Physics)},
  year          = {2026},
  eprint        = {2605.26324},
  archivePrefix = {arXiv},
  primaryClass  = {cs.LG},
  url           = {https://openreview.net/forum?id=MeAFOZnrvM},
}

@article{raissi2019physics,
      title={Physics-informed neural networks: A deep learning framework for solving forward and inverse problems involving nonlinear partial differential equations},
      author={Raissi, Maziar and Perdikaris, Paris and Karniadakis, George E.},
      journal={Journal of Computational Physics},
      volume={378},
      pages={686--707},
      year={2019},
}

@inproceedings{frans2025shortcut,
  title         = {One Step Diffusion via Shortcut Models},
  author        = {Frans, Kevin and Hafner, Danijar and Levine, Sergey and Abbeel, Pieter},
  booktitle     = {International Conference on Learning Representations},
  year          = {2025},
  eprint        = {2410.12557},
  archivePrefix = {arXiv},
  url           = {https://arxiv.org/abs/2410.12557}
}

@article{qin2019data,
  title         = {Data driven governing equations approximation using deep neural networks},
  author        = {Qin, Tong and Wu, Kailiang and Xiu, Dongbin},
  journal       = {Journal of Computational Physics},
  volume        = {395},
  pages         = {620--635},
  year          = {2019},
  eprint        = {1811.05537},
  archivePrefix = {arXiv},
  url           = {https://arxiv.org/abs/1811.05537}
}

@article{chen2023deeposg,
  title         = {Deep-{OSG}: Deep Learning of Operators in Semigroup},
  author        = {Chen, Junfeng and Wu, Kailiang},
  journal       = {Journal of Computational Physics},
  volume        = {493},
  pages         = {112498},
  year          = {2023},
  eprint        = {2302.03358},
  archivePrefix = {arXiv},
  url           = {https://arxiv.org/abs/2302.03358}
}

@inproceedings{zhou2026terminal,
  title         = {Terminal Velocity Matching},
  author        = {Zhou, Linqi and Parger, Mathias and Haque, Ayaan and Song, Jiaming},
  booktitle     = {International Conference on Learning Representations},
  year          = {2026},
  eprint        = {2511.19797},
  archivePrefix = {arXiv},
  url           = {https://arxiv.org/abs/2511.19797}
}

\clearpage
\appendix
\restorespacing

\section{Method Cheatsheet}
\label{app:methods}
\cref{tab:method-summary} collects the definitions of all methods in \cref{tab:main-avg}.
For the evolution operators, write $h=\Delta t$ and $\psi_{\theta}(u,h)=u+h\bar{v}_{\theta}(u,h)$.
The model's own velocity at a state $u$ is $\bar{v}_{\theta}(u,0)$, and $D\bar{v}_{\theta}(u,h)[\cdot]$ denotes the Fr\'echet derivative with respect to the state, acting on a velocity.
The consistency conditions in the table are training constraints: we minimize the expected squared difference between their two sides, with the right-hand side held under stop-gradient.
These terms supplement the data MSE with weight $w(\Delta t/\dtmax)^p$, where $p=2$ and $w$ is tuned per system, as in \cref{eq:total,eq:loss-total}.
Expectations are over training states and horizons, and also over the split fraction $\alpha\in[0,1]$.

{\setlength{\intextsep}{2\baselineskip}
\begin{table}[!ht]
\caption{\textbf{Method definitions for the benchmark tables.}
The evolution-operator rows give the training objective or consistency condition; the baseline rows give the prediction rule.
NFE is the number of network evaluations used to reach five Lyapunov times in either evaluation regime.
The final column reports the average relative $L_2$ error at five Lyapunov times from \cref{tab:main-avg}.
Physics-distilled rows are shaded, and bold marks the lowest error within each group and evaluation regime.}
\label{tab:method-summary}
\centering
\small
\setlength{\tabcolsep}{1.5pt}
\renewcommand{\arraystretch}{1.18}
\begin{tabular*}{\textwidth}{@{}p{0.22\textwidth}p{\dimexpr0.605\textwidth-6\tabcolsep\relax}p{0.055\textwidth}p{0.12\textwidth}@{}}
\toprule
Method & Equation and distinguishing feature & NFE & Avg.\ $\rell\!\downarrow$\newline ID / OOD \\
\midrule
\rowcolor{black!8}\multicolumn{4}{l}{\emph{Evolution operator: elapsed-time step $h$}} \\
Data & $\mathcal{L}=\mathbb{E}\!\left[\big\|\bar{v}_{\theta}(u_t,h)-\tfrac{u_{t+h}-u_t}{h}\big\|^2\right]$.\newline
Paired-trajectory MSE ($h>0$). & 100 & 0.46 / 0.44 \\
\addlinespace
Semigroup & $\psi_{\theta}(u,h)=\psi_{\theta}\big(\psi_{\theta}(u,\alpha h),(1-\alpha)h\big)$.\newline
One jump agrees with two composed jumps. & 100 & 0.46 / 0.44 \\
\addlinespace
Eulerian-self & $\bar{v}_{\theta}(u,h)=\bar{v}_{\theta}(u,0)+hD\bar{v}_{\theta}(u,h)[\bar{v}_{\theta}(u,0)]-h\partial_h\bar{v}_{\theta}(u,h)$.\newline
Model velocity at the starting state $u$. & 100 & 0.45 / 0.43 \\
\addlinespace
Lagrangian-self & $\bar{v}_{\theta}(u,h)=\bar{v}_{\theta}\big(u+h\bar{v}_{\theta}(u,h),0\big)-h\partial_h\bar{v}_{\theta}(u,h)$.\newline
Model velocity at the predicted endpoint. & 100 & 0.45 / 0.48 \\
\addlinespace
\rowcolor{physhl}Eulerian-phys & $\bar{v}_{\theta}(u,h)=\mathcal{F}(u)+hD\bar{v}_{\theta}(u,h)[\mathcal{F}(u)]-h\partial_h\bar{v}_{\theta}(u,h)$.\newline
Governing-equation velocity at the starting state $u$. & 100 & \textbf{0.28} / \textbf{0.27} \\
\addlinespace
\rowcolor{physhl}Lagrangian-phys & $\bar{v}_{\theta}(u,h)=\mathcal{F}\big(u+h\bar{v}_{\theta}(u,h)\big)-h\partial_h\bar{v}_{\theta}(u,h)$.\newline
Governing-equation velocity at the predicted endpoint. & 100 & 0.31 / 0.30 \\
\midrule
\rowcolor{black!8}\multicolumn{4}{l}{\emph{Baselines}} \\
AR $v$-pred & $u_{t+\delta t}=u_t+\delta t\,f_{\theta}(u_t)$.\newline
Velocity prediction; fixed one-frame step $\delta t$. & 1000 & \textbf{0.27} / \textbf{0.26} \\
\addlinespace
AR $x$-pred (cond.\ $t$) & $u_{t+\delta t}=f_{\theta}(u_t,t)$.\newline
Direct next-state prediction conditioned on absolute time. & 1000 & 0.64 / 1.2 \\
\addlinespace
AR $x$-pred (uncond.\ $t$) & $u_{t+\delta t}=f_{\theta}(u_t)$.\newline
Direct next-state prediction without time conditioning. & 1000 & 0.59 / 0.57 \\
\addlinespace
CFO & $\partial_tu_t=f_{\theta}(u_t,t)$, integrated with RK4.\newline
Continuous-time velocity conditioned on absolute time. & 4000 & 0.30 / 1.2 \\
\bottomrule
\end{tabular*}
\end{table}}

In the baseline rows, $f_{\theta}$ denotes the separately trained network of the corresponding baseline, which returns the next state or a velocity as its row shows.
In the baseline prediction rules, $u_t$ denotes the current rollout state.
The Eulerian and Lagrangian equations differ in where the velocity is evaluated and whether a derivative with respect to the state is required; within either form, self and physics variants differ only in the velocity source.
Their full expressions and stop-gradient placement appear in \cref{eq:sg-eulerian,eq:sg-lagrangian,eq:sg-semigroup}, and training uses the normalized physical field in \cref{eq:physics-normalized}.
The rollout step counts and CFO's four network evaluations per RK4 step are specified in \cref{app:metrics}.

\clearpage

\FloatBarrier
\section{Proofs and learning objectives}
\label{sec:theory-consistency}
To make the time-invariant formulation precise, we work in the autonomous setting throughout this appendix and define every symbol used in the consistency equations.

\subsection{Consistency Equations}
\label{subsec:consistency-proof}

\paragraph{Symbol definitions.}
We use the following notation throughout this appendix.
\begin{itemize}
    \item $\mathcal{H}$ is the Hilbert space of solution states, and $\mathcal{U} \subset \mathcal{H}$ is the set of admissible states on which the dynamics are defined.
    \item $t \in [0,T]$ is the physical time, and $\Delta t \in [0,\dtmax]$ is the elapsed prediction horizon.
    \item $u_t \in \mathcal{U}$ denotes the exact solution state at time $t$.
    \item $v : \mathcal{U} \to \mathcal{H}$ is the autonomous velocity field, and the exact trajectory satisfies $\partial_t u_t = v(u_t)$.
    \item $\psi : \mathcal{U} \times [0,\dtmax] \to \mathcal{U}$ is the time-invariant evolution operator, where $\psi(u_t,\Delta t)$ predicts the state after elapsed time $\Delta t$.
    \item $D\psi(u,\Delta t) : \mathcal{H} \to \mathcal{H}$ denotes the Fr\'echet derivative of the map $u \mapsto \psi(u,\Delta t)$ with respect to the state variable, and $D\psi(u,\Delta t)[h]$ is its action on a direction $h \in \mathcal{H}$.
    \item $\partial_{\Delta t}\psi(u,\Delta t)$ denotes the partial derivative of $\psi$ with respect to the elapsed-time argument.
\end{itemize}

\paragraph{Assumptions.}
Assume that the trajectory $u : [0,T] \to \mathcal{U}$, the velocity field $v$, and the operator $\psi$ are continuously differentiable, that the operator does not change the state over zero elapsed time, $\psi(u,0) = u$, and that it satisfies the time-shift invariance along one trajectory,
\begin{equation}
\psi(u_t,\Delta t) = \psi(u_{t+s},\Delta t-s),
\label{eq:time-shift-invariance}
\end{equation}
for every shift $s \in [0,\Delta t]$.
So that every state the proofs below evaluate lies on the trajectory, we take the base time in $t \in [0,\,T-\dtmax]$ and the horizon in $\Delta t \in [0,\dtmax]$, which places $u_{t+\Delta t}$ inside $[0,T]$ for every horizon we consider.

\begin{proposition}[Consistency equations]
\label{prop:consistency-equations}
Under the assumptions above, the time-invariant evolution operator satisfies, for every $u_t$ and every $\Delta t\in[0,\dtmax]$:
\begin{itemize}
\item[(i)] the \emph{Eulerian consistency equation}
\begin{equation}
\partial_{\Delta t} \psi(u_t, \Delta t) = D \psi(u_t, \Delta t)[v(u_t)];
\label{eq:eulerian-consistency}
\end{equation}
\item[(ii)] the \emph{Lagrangian consistency equation}
\begin{equation}
\partial_{\Delta t}\psi(u_t,\Delta t)=v(\psi(u_t,\Delta t)).
\label{eq:lagrangian-consistency}
\end{equation}
\end{itemize}
\end{proposition}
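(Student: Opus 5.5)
The plan is to differentiate the time-shift invariance \cref{eq:time-shift-invariance} in the shift $s$, and to use it in two ways: at $s=0$ it gives the Eulerian form, and at $s=\Delta t$ it gives the Lagrangian form.

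\textbf{Consequence of the invariance.} Fix $t\in[0,T-\dtmax]$ and $\Delta t$. The left-hand side of \cref{eq:time-shift-invariance} does not depend on $s$. By the chain rule, with $u$, $v$, $\psi$ all $C^1$ and $\partial_s u_{t+s}=v(u_{t+s})$, we obtain for every $s\in[0,\Delta t]$
\begin{equation*}
0=\frac{d}{ds}\psi(u_{t+s},\Delta t-s)=D\psi(u_{t+s},\Delta t-s)[v(u_{t+s})]-\partial_{\Delta t}\psi(u_{t+s},\Delta t-s).
\end{equation*}
One-sided derivatives are used at the endpoints $s=0$ and $s=\Delta t$.

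\textbf{Eulerian form.} Setting $s=0$ gives $\partial_{\Delta t}\psi(u_t,\Delta t)=D\psi(u_t,\Delta t)[v(u_t)]$ directly, which is (i).

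\textbf{Lagrangian form.} Here I would first identify $\psi(u_t,\Delta t)$ with $u_{t+\Delta t}$. Setting $s=\Delta t$ in \cref{eq:time-shift-invariance} and using $\psi(u,0)=u$ gives $\psi(u_t,\Delta t)=u_{t+\Delta t}$. Differentiating this identity in $\Delta t$ (with $t$ fixed and $\Delta t$ ranging over $[0,\dtmax]$, so $t+\Delta t$ stays in $[0,T]$) gives
\begin{equation*}
\partial_{\Delta t}\psi(u_t,\Delta t)=\partial_t u_{t+\Delta t}=v(u_{t+\Delta t})=v(\psi(u_t,\Delta t)),
\end{equation*}
which is (ii).

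As a cross-check, one can instead set $s=\Delta t$ in the differentiated identity. This gives $\partial_{\Delta t}\psi(u_{t+\Delta t},0)=D\psi(u_{t+\Delta t},0)[v(u_{t+\Delta t})]=v(u_{t+\Delta t})$, because $D\psi(\cdot,0)=I$ follows from $\psi(u,0)=u$. Here $\partial_{\Delta t}\psi$ is evaluated at horizon $0$, not $\Delta t$, so this route does not yield (ii) on its own; it shows $\partial_{\Delta t}\psi(u,0)=v(u)$ on the trajectory, which is consistent with the tangent condition.

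\textbf{Main obstacle.} The delicate point is rigor at the boundary rather than any computation. Three things need care: the derivative in $s$ at the endpoints is one-sided; $D\psi(u_{t+s},\cdot)$ is only defined because $\mathcal{U}$ is treated as open, or at least because the directional derivative along $v$ is what is actually needed; and the statement ``for every $u_t$'' means every state on the trajectory with $t\in[0,T-\dtmax]$. I would state explicitly that the identities hold along that trajectory, which is exactly the assumption's scope.
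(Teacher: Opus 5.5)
Your proposal is correct and follows the paper's proof essentially step for step: (i) comes from differentiating the constant map $s\mapsto\psi(u_{t+s},\Delta t-s)$ at $s=0$, and (ii) from reducing to $\psi(u_t,\Delta t)=u_{t+\Delta t}$ and differentiating in $\Delta t$. One boundary case that neither you nor the paper treats is $\Delta t=0$ in (i), where $[0,\Delta t]$ is a single point and constancy gives no derivative; there (i) coincides with (ii) because $D\psi(\cdot,0)=I$, so your Lagrangian argument, or continuity of both sides in $\Delta t$, covers it.
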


\begin{proof}[Proof of \cref{eq:eulerian-consistency}]
Fix $t$ and $\Delta t$, and define
\begin{equation}
g(s) \triangleq \psi(u_{t+s}, \Delta t-s), \qquad s \in [0,\Delta t].
\end{equation}
By \cref{eq:time-shift-invariance}, $g$ is constant in $s$: every pair $(u_{t+s},\Delta t-s)$ lands on the same endpoint, so $g'(0) = 0$.
Applying the chain rule gives
\begin{equation}
g'(0)
= D\psi(u_t,\Delta t)\left[\frac{d}{ds}u_{t+s}\Big|_{s=0}\right]
+ \partial_{\Delta t}\psi(u_t,\Delta t)\frac{d}{ds}(\Delta t-s)\Big|_{s=0}.
\end{equation}
Using $\frac{d}{ds}u_{t+s}\vert_{s=0} = \partial_t u_t = v(u_t)$ and $\frac{d}{ds}(\Delta t-s)\vert_{s=0} = -1$, we obtain
\begin{equation}
0 = g'(0) = D\psi(u_t,\Delta t)[v(u_t)] - \partial_{\Delta t}\psi(u_t,\Delta t).
\end{equation}
Rearranging proves \cref{eq:eulerian-consistency}.
\end{proof}

\begin{proof}[Proof of \cref{eq:lagrangian-consistency}]
Taking $s = \Delta t$ in \cref{eq:time-shift-invariance} and using $\psi(u,0) = u$ gives
\begin{equation}
\psi(u_t,\Delta t) = \psi(u_{t+\Delta t}, 0) = u_{t+\Delta t},
\end{equation}
so the operator advances the exact trajectory.
Differentiating in $\Delta t$ and using the dynamics $\partial_t u_t = v(u_t)$ gives
\begin{equation}
\partial_{\Delta t}\psi(u_t,\Delta t)
= \frac{d}{d\Delta t}u_{t+\Delta t}
= v(u_{t+\Delta t})
= v(\psi(u_t,\Delta t)),
\end{equation}
which is \cref{eq:lagrangian-consistency}.
\end{proof}

\subsection{Differential Equations}
\label{subsec:flow-map-equations}
Suppose the evolution operator admits the averaged-velocity parameterization
\begin{equation}
\psi(u_t,\Delta t) = u_t + \Delta t\,\bar{v}(u_t,\Delta t).
\end{equation}
Here $\bar{v}(u_t,\Delta t)$ plays the role of an averaged velocity over the horizon $\Delta t$, and we assume it is jointly continuously differentiable in the state and the horizon.
Without that regularity the parameterization leaves $\bar{v}(u_t,0)$ free, since the factor $\Delta t$ annihilates it at zero horizon, and \cref{lem:boundary-condition} can fail.

\begin{lemma}[Boundary condition]
\label{lem:boundary-condition}
The averaged velocity is anchored to the instantaneous velocity at zero horizon,
\begin{equation}
\bar{v}(u_t,0)=v(u_t).
\label{eq:lagrangian-boundary}
\end{equation}
\end{lemma}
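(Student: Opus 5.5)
The plan is to compute the horizon derivative of $\psi$ at $\Delta t=0$ in two ways and compare them. From the parameterization $\psi(u_t,\Delta t)=u_t+\Delta t\,\bar{v}(u_t,\Delta t)$ and the assumed joint $C^1$ regularity of $\bar{v}$, the product rule gives
\begin{equation}
\partial_{\Delta t}\psi(u_t,\Delta t)=\bar{v}(u_t,\Delta t)+\Delta t\,\partial_{\Delta t}\bar{v}(u_t,\Delta t).
\end{equation}
The term $\partial_{\Delta t}\bar{v}$ is continuous, hence bounded near $\Delta t=0$. Therefore the second term vanishes at zero horizon, and we get $\partial_{\Delta t}\psi(u_t,0)=\bar{v}(u_t,0)$. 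Using the one-sided derivative at the endpoint $\Delta t=0$ is fine, since $\bar{v}$ is $C^1$ on $[0,\dtmax]$.

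For the other side, I would invoke the Lagrangian consistency equation \cref{eq:lagrangian-consistency} of \cref{prop:consistency-equations} at $\Delta t=0$. Combined with the normalization $\psi(u,0)=u$, it gives $\partial_{\Delta t}\psi(u_t,0)=v(\psi(u_t,0))=v(u_t)$. There is also a direct route that avoids the proposition. The proof of \cref{eq:lagrangian-consistency} shows $\psi(u_t,\Delta t)=u_{t+\Delta t}$, so the secant $\bar{v}(u_t,\Delta t)=(u_{t+\Delta t}-u_t)/\Delta t$ for $\Delta t>0$. This secant tends to $\partial_t u_t=v(u_t)$ as $\Delta t\to0^+$. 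Continuity of $\bar{v}$ in $\Delta t$ then identifies $\bar{v}(u_t,0)$ with this limit.

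Equating the two expressions for $\partial_{\Delta t}\psi(u_t,0)$ yields \cref{eq:lagrangian-boundary}. The only delicate point is the one the paper flags. Without continuity of $\bar{v}$ at $\Delta t=0$, the value $\bar{v}(u_t,0)$ is unconstrained, because it is multiplied by $\Delta t=0$ in $\psi$. So I would state explicitly that the limit $\Delta t\to0^+$ is passed through $\bar{v}$ using its assumed regularity. Continuity alone suffices for the secant route, while the product-rule route also uses boundedness of $\partial_{\Delta t}\bar{v}$. Beyond this, I expect no real obstacle.
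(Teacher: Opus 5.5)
Your proposal is correct and follows the paper's proof. The paper likewise differentiates the parameterization to get $\partial_{\Delta t}\psi(u_t,0)=\bar{v}(u_t,0)$, evaluates \cref{eq:lagrangian-consistency} at $\Delta t=0$ with $\psi(u,0)=u$ to get $v(u_t)$, and equates the two. Your secant-limit alternative is the same argument with the proof of \cref{eq:lagrangian-consistency} unpacked. It is valid, and it also shows that continuity of $\bar{v}$ in $\Delta t$ alone is enough: the boundedness of $\partial_{\Delta t}\bar{v}$ near zero that you invoke in the product-rule route is not needed.
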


\begin{proof}
Differentiating the parameterization in $\Delta t$ and evaluating at $\Delta t=0$ gives
\begin{equation}
\partial_{\Delta t}\psi(u_t,\Delta t)\big|_{\Delta t=0}
=\Big[\bar{v}(u_t,\Delta t)+\Delta t\,\partial_{\Delta t}\bar{v}(u_t,\Delta t)\Big]_{\Delta t=0}
=\bar{v}(u_t,0).
\end{equation}
Evaluating \cref{eq:lagrangian-consistency} at $\Delta t=0$ gives
\begin{equation}
\partial_{\Delta t}\psi(u_t,\Delta t)\big|_{\Delta t=0}
=v\big(\psi(u_t,0)\big)
=v(u_t).
\end{equation}
Equating the two proves \cref{eq:lagrangian-boundary}.
\end{proof}

\begin{proposition}[Differential consistency equations]
\label{prop:differential-equations}
Under \cref{eq:eulerian-consistency,eq:lagrangian-consistency} respectively, the averaged velocity satisfies:
\begin{itemize}
\item[(i)] the \emph{Eulerian equation}
\begin{equation}
\bar{v}(u_t,\Delta t)
= v(u_t) + \Delta t \left( D\bar{v}(u_t,\Delta t)[v(u_t)] - \partial_{\Delta t}\bar{v}(u_t,\Delta t) \right);
\label{eq:theory-pimf-equation}
\end{equation}
\item[(ii)] the \emph{Lagrangian equation}
\begin{equation}
\bar{v}(u_t,\Delta t)
= v\!\left(u_t+\Delta t\,\bar{v}(u_t,\Delta t)\right)
- \Delta t\,\partial_{\Delta t}\bar{v}(u_t,\Delta t).
\label{eq:theory-pimf-equation-lagrangian}
\end{equation}
\end{itemize}
\end{proposition}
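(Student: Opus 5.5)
The plan is to substitute the averaged-velocity parameterization $\psi(u,\Delta t)=u+\Delta t\,\bar{v}(u,\Delta t)$ into the two consistency equations of \cref{prop:consistency-equations}, compute both sides, and rearrange. The regularity assumption that $\bar{v}$ is jointly continuously differentiable licenses every derivative below. First we record the two derivatives of the parameterization. Differentiating in the horizon gives
\begin{equation*}
\partial_{\Delta t}\psi(u_t,\Delta t)=\bar{v}(u_t,\Delta t)+\Delta t\,\partial_{\Delta t}\bar{v}(u_t,\Delta t).
\end{equation*}
The Fr\'echet derivative in the state, applied to a direction $h\in\mathcal{H}$, is
\begin{equation*}
D\psi(u_t,\Delta t)[h]=h+\Delta t\,D\bar{v}(u_t,\Delta t)[h].
\end{equation*}
Here the identity map contributes $h$, and $\Delta t$ is a scalar constant with respect to $u$.

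For (i), we insert these into \cref{eq:eulerian-consistency} with $h=v(u_t)$. This gives
\begin{equation*}
\bar{v}(u_t,\Delta t)+\Delta t\,\partial_{\Delta t}\bar{v}(u_t,\Delta t)=v(u_t)+\Delta t\,D\bar{v}(u_t,\Delta t)[v(u_t)].
\end{equation*}
Moving the $\partial_{\Delta t}\bar{v}$ term to the right and factoring out $\Delta t$ yields \cref{eq:theory-pimf-equation} directly.

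For (ii), we insert the horizon derivative into \cref{eq:lagrangian-consistency}. The right-hand side becomes $v(\psi(u_t,\Delta t))=v\bigl(u_t+\Delta t\,\bar{v}(u_t,\Delta t)\bigr)$, again by the parameterization. Subtracting $\Delta t\,\partial_{\Delta t}\bar{v}$ from both sides gives \cref{eq:theory-pimf-equation-lagrangian}.

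There is no real obstacle; the argument is pure algebra once the chain and product rules are applied. The only point needing care is the endpoint $\Delta t=0$. There the horizon derivative of the parameterization must exist as a one-sided derivative, which joint $C^1$ regularity of $\bar{v}$ up to the boundary provides. At that endpoint both identities reduce to $\bar{v}(u_t,0)=v(u_t)$, consistent with \cref{lem:boundary-condition}, so the statements hold on the whole interval $[0,\dtmax]$.
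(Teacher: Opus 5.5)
Your proposal is correct and matches the paper's proof essentially step for step. You expand $\partial_{\Delta t}\psi$ by the product rule and $D\psi(u_t,\Delta t)[v(u_t)]$ by linearity, substitute both into the Eulerian and Lagrangian consistency equations, and rearrange; your extra remark about the endpoint $\Delta t=0$ is a harmless addition that the paper covers implicitly through its joint $C^1$ assumption on $\bar{v}$.
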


\begin{proof}[Proof of \cref{eq:theory-pimf-equation}]
The product rule gives
\begin{equation}
\partial_{\Delta t}\psi(u_t,\Delta t)
= \bar{v}(u_t,\Delta t) + \Delta t\,\partial_{\Delta t}\bar{v}(u_t,\Delta t).
\label{eq:partial-psi-expanded}
\end{equation}
Since $\psi(\cdot,\Delta t)$ is the identity plus $\Delta t\,\bar{v}(\cdot,\Delta t)$, linearity of the derivative gives
\begin{equation}
D\psi(u_t,\Delta t)[v(u_t)]
= v(u_t) + \Delta t\,D\bar{v}(u_t,\Delta t)[v(u_t)].
\label{eq:Dpsi-expanded}
\end{equation}
Substituting \cref{eq:partial-psi-expanded,eq:Dpsi-expanded} into \cref{eq:eulerian-consistency} gives
\begin{equation}
\bar{v}(u_t,\Delta t) + \Delta t\,\partial_{\Delta t}\bar{v}(u_t,\Delta t)
= v(u_t) + \Delta t\,D\bar{v}(u_t,\Delta t)[v(u_t)].
\end{equation}
Rearranging proves \cref{eq:theory-pimf-equation}.
\end{proof}

\begin{proof}[Proof of \cref{eq:theory-pimf-equation-lagrangian}]
The left-hand side of \cref{eq:lagrangian-consistency} is \cref{eq:partial-psi-expanded}.
Substituting the parameterization into its right-hand side gives
\begin{equation}
v(\psi(u_t,\Delta t)) = v\!\left(u_t+\Delta t\,\bar{v}(u_t,\Delta t)\right).
\label{eq:lagrangian-endpoint-velocity}
\end{equation}
Therefore \cref{eq:lagrangian-consistency} reads
\begin{equation}
\bar{v}(u_t,\Delta t)+\Delta t\,\partial_{\Delta t}\bar{v}(u_t,\Delta t)
= v\!\left(u_t+\Delta t\,\bar{v}(u_t,\Delta t)\right),
\end{equation}
and moving the $\Delta t\,\partial_{\Delta t}\bar{v}(u_t,\Delta t)$ term to the right-hand side proves \cref{eq:theory-pimf-equation-lagrangian}.
\end{proof}

\subsection{Semigroup Property}
\label{subsec:integral-consistency}

The semigroup property composes the horizon from two shorter jumps.
It follows directly from \cref{eq:time-shift-invariance} and uses no differential structure for learning.

\begin{proposition}[Semigroup property]
\label{prop:semigroup}
For $\Delta t_1,\Delta t_2\ge 0$ with $\Delta t_1+\Delta t_2\le\dtmax$,
\begin{equation}
\psi(u_t,\Delta t_1+\Delta t_2)=\psi\big(\psi(u_t,\Delta t_1),\Delta t_2\big).
\label{eq:semigroup-property}
\end{equation}
\end{proposition}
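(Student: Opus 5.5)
The plan is to reduce both sides of \cref{eq:semigroup-property} to the same trajectory state using only \cref{eq:time-shift-invariance} and $\psi(u,0)=u$, exactly as in the proof of \cref{eq:lagrangian-consistency}. The first step is to record the consequence already derived there: taking $s=\Delta t$ in \cref{eq:time-shift-invariance} gives
\begin{equation*}
\psi(u_t,\Delta t)=\psi(u_{t+\Delta t},0)=u_{t+\Delta t}
\end{equation*}
for every admissible base time $t$ and every horizon $\Delta t\in[0,\dtmax]$. In words, the operator evaluated at a trajectory state returns the exact trajectory state after the elapsed time.

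Next I would apply this identity three times. For the left-hand side, with horizon $\Delta t_1+\Delta t_2\le\dtmax$, it gives $\psi(u_t,\Delta t_1+\Delta t_2)=u_{t+\Delta t_1+\Delta t_2}$. For the inner jump on the right-hand side it gives $\psi(u_t,\Delta t_1)=u_{t+\Delta t_1}$, so the right-hand side becomes $\psi(u_{t+\Delta t_1},\Delta t_2)$. Applying the identity once more, now with base time $t+\Delta t_1$ and horizon $\Delta t_2$, yields $u_{t+\Delta t_1+\Delta t_2}$. Both sides therefore agree.

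An alternative route avoids re-basing: use \cref{eq:time-shift-invariance} directly with $\Delta t=\Delta t_1+\Delta t_2$ and $s=\Delta t_1$. This gives $\psi(u_t,\Delta t_1+\Delta t_2)=\psi(u_{t+\Delta t_1},\Delta t_2)$, and then substitute $u_{t+\Delta t_1}=\psi(u_t,\Delta t_1)$. I would present this version, since it invokes the trajectory identity only once.

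The main obstacle is domain bookkeeping rather than any computation. The assumption restricts base times to $[0,T-\dtmax]$, and the time-shift invariance is stated with base time $t$ and shift $s\le\Delta t$. In the alternative route, $s=\Delta t_1$ lies in $[0,\Delta t_1+\Delta t_2]$. Both $u_{t+\Delta t_1}$ and $u_{t+\Delta t_1+\Delta t_2}$ lie within $[0,T]$ because $\Delta t_1+\Delta t_2\le\dtmax$, so the alternative route stays inside the stated hypotheses. I would choose it precisely to avoid needing $t+\Delta t_1\le T-\dtmax$, which the first route would require. I would also remark that the property holds along trajectory states $u_t$, which is the setting in which \cref{eq:semi} is used. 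Setting $\Delta t_1=\alpha\Delta t$ and $\Delta t_2=(1-\alpha)\Delta t$ recovers \cref{eq:semi}.
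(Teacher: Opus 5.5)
Your proposal is correct, and the version you say you would present is exactly the paper's proof: you take $\psi(u_t,\Delta t_1)=u_{t+\Delta t_1}$ from the Lagrangian argument, apply \cref{eq:time-shift-invariance} with horizon $\Delta t_1+\Delta t_2$ and shift $s=\Delta t_1$, and substitute. Your domain remark is a useful addition the paper leaves implicit, since this route uses only base time $t$, whereas re-basing at $t+\Delta t_1$ could leave the stated range $[0,T-\dtmax]$.
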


\begin{proof}
As in the proof of \cref{eq:lagrangian-consistency}, the operator advances the exact trajectory, $\psi(u_t,\Delta t_1)=u_{t+\Delta t_1}$.
Taking $s=\Delta t_1$ in \cref{eq:time-shift-invariance} with horizon $\Delta t_1+\Delta t_2$ gives
\begin{equation}
\psi(u_t,\Delta t_1+\Delta t_2)
=\psi\big(u_{t+\Delta t_1},\Delta t_2\big)
=\psi\big(\psi(u_t,\Delta t_1),\Delta t_2\big),
\end{equation}
which is \cref{eq:semigroup-property}.
\end{proof}

\subsection{Learning Objectives}
\label{subsec:learning-objectives}

Following the parameterization in \cref{subsec:flow-map-equations}, the model is $\psi_{\theta}(u,\Delta t)\triangleq u+\Delta t\,\bar{v}_{\theta}(u,\Delta t)$, and each equation yields a learning objective by squaring its residual.
Because the data are trajectories, we first use a regression loss on the averaged velocity as the data term,
\begin{equation}
\mathcal{L}_{\mathrm{data}}(\theta)
= \mathbb{E}_{u_t,\,\Delta t>0}\Big[\big\|\bar{v}_{\theta}(u_t,\Delta t)-\tfrac{u_{t+\Delta t}-u_t}{\Delta t}\big\|^2\Big].
\label{eq:loss-data}
\end{equation}

For the consistency losses, the instantaneous velocity $v$ is supplied by the model's own boundary value $\bar{v}_{\theta}(\cdot,0)$ (self-distillation, \cref{lem:boundary-condition}), and the Semigroup objective uses a fraction $\alpha$ to randomize the split.
The losses take expectations over states $u_t$ and horizons $\Delta t$.
The sampling distribution of $\Delta t$ is compared in \cref{app:regimes}, and other implementation choices in \cref{app:details}.
\begin{align}
\mathcal{L}_{\mathrm{E}}(\theta)
&= \mathbb{E}_{u_t,\Delta t}\Big[\big\|v(u_t)-\bar{v}_{\theta}(u_t,\Delta t)
-\Delta t\big(\partial_{\Delta t}\bar{v}_{\theta}(u_t,\Delta t)-D\bar{v}_{\theta}(u_t,\Delta t)[v(u_t)]\big)\big\|^2\Big],
\label{eq:loss-eulerian}\\
\mathcal{L}_{\mathrm{L}}(\theta)
&= \mathbb{E}_{u_t,\Delta t}\Big[\big\|v\big(\psi_{\theta}(u_t,\Delta t)\big)-\bar{v}_{\theta}(u_t,\Delta t)
-\Delta t\,\partial_{\Delta t}\bar{v}_{\theta}(u_t,\Delta t)\big\|^2\Big],
\label{eq:loss-lagrangian}\\
\mathcal{L}_{\mathrm{S}}(\theta)
&= \mathbb{E}_{u_t,\Delta t,\alpha}\Big[\big\|\psi_{\theta}(u_t,\Delta t)-\psi_{\theta}\big(\psi_{\theta}(u_t,\alpha\Delta t),(1-\alpha)\Delta t\big)\big\|^2\Big].
\label{eq:loss-semigroup}
\end{align}

We also weight the consistency term with the time-dependent coefficient $w\,(\Delta t/\dtmax)^{p}$:
\begin{equation}
\mathcal{L}(\theta)=\mathcal{L}_{\mathrm{data}}(\theta)+w\,\mathcal{L}^{(p)}_{\mathrm{cons}}(\theta),
\qquad w\ge 0,
\label{eq:loss-total}
\end{equation}
where $\mathcal{L}^{(p)}_{\mathrm{cons}}$ is one of \cref{eq:loss-eulerian,eq:loss-lagrangian,eq:loss-semigroup} with $(\Delta t/\dtmax)^{p}$ inside the expectation.
The comparison over $p$ is deferred to \cref{app:ablation-p}.
Here, $p=0$ means a constant weight; we perform hyperparameter tuning for $w$, and $w=0$ is the data-only control.

\paragraph{Stop-gradient placement.}
Let $\mathrm{sg}[\cdot]$ denote the stop-gradient operator, the identity in the forward pass with zero
derivative in the backward pass.
Every consistency target is built from the model's own predictions, so a gradient taken through the
target would let each objective train its own output.
For stability, the whole target is therefore frozen under a single stop-gradient, and each objective
regresses the direct evaluation onto it.
As implemented,
\cref{eq:loss-eulerian,eq:loss-lagrangian,eq:loss-semigroup}
carry their stop-gradients as
\begin{align}
\mathcal{L}_{\mathrm{E}}(\theta)
&= \mathbb{E}_{u_t,\Delta t}\Big[\big\|\bar{v}_{\theta}(u_t,\Delta t)
-\mathrm{sg}\big[v(u_t)+\Delta t\big(D\bar{v}_{\theta}(u_t,\Delta t)[v(u_t)]-\partial_{\Delta t}\bar{v}_{\theta}(u_t,\Delta t)\big)\big]\big\|^2\Big],
\label{eq:sg-eulerian}\\
\mathcal{L}_{\mathrm{L}}(\theta)
&= \mathbb{E}_{u_t,\Delta t}\Big[\big\|\bar{v}_{\theta}(u_t,\Delta t)
-\mathrm{sg}\big[v\big(u_t+\Delta t\,\bar{v}_{\theta}(u_t,\Delta t)\big)-\Delta t\,\partial_{\Delta t}\bar{v}_{\theta}(u_t,\Delta t)\big]\big\|^2\Big],
\label{eq:sg-lagrangian}\\
\mathcal{L}_{\mathrm{S}}(\theta)
&= \mathbb{E}_{u_t,\Delta t,\alpha}\Big[\big\|\psi_{\theta}(u_t,\Delta t)-\mathrm{sg}\big[\psi_{\theta}\big(\psi_{\theta}(u_t,\alpha\Delta t),(1-\alpha)\Delta t\big)\big]\big\|^2\Big].
\label{eq:sg-semigroup}
\end{align}

\subsection{Physics-Distillation}
\label{subsec:implementation-details}

Instead of using the model's own boundary value in \cref{subsec:learning-objectives}, we can use the governing equations $\mathcal{F}$ (physics-distillation) to supply the instantaneous velocity,
\begin{equation}
\partial_t u_t = v(u_t) = \mathcal{F}(u_t).
\label{eq:physics-velocity}
\end{equation}

\paragraph{Normalized coordinates.}
Let $C$ be the number of state channels, let $\mu\in\mathbb{R}^{C}$ and $\sigma\in\mathbb{R}^{C}$ be the
per-channel mean and standard deviation of the states in the training set, and let
$\operatorname{diag}(\sigma)$ be the diagonal matrix with entries $\sigma$ along the channel axis.
We train entirely in normalized coordinates, where states are z-scored,
$z=\operatorname{diag}(\sigma)^{-1}(u-\mu)$, and time is rescaled to the unit interval, so a trajectory of
$K$ snapshots (indexed from $0$ to $K-1$) with sampling time interval $\delta t$ spans the normalized time $\tau\in[0,1]$, with physical time $t=\tau\,T_{\mathrm{span}}$, where
$T_{\mathrm{span}}\triangleq (K-1)\,\delta t$.

Every objective of \cref{subsec:learning-objectives} is evaluated in these coordinates, and every reported
metric is computed in physical units after inverting the normalization.
Following this normalization, the governing equations $\mathcal{F}$ use the chain rule for physics-distillation,
\begin{equation}
\partial_{\tau} z
= T_{\mathrm{span}}\,\operatorname{diag}(\sigma)^{-1}\,\partial_t u
= T_{\mathrm{span}}\,\operatorname{diag}(\sigma)^{-1}\,\mathcal{F}\big(\mu+\operatorname{diag}(\sigma)\,z\big),
\end{equation}
Thus for implementation, model training uses the normalized governing equations
\begin{equation}
\mathcal{F}_{\mathrm{norm}}(z)
\triangleq T_{\mathrm{span}}\,\operatorname{diag}(\sigma)^{-1}\,\mathcal{F}\big(\mu+\operatorname{diag}(\sigma)\,z\big).
\label{eq:physics-normalized}
\end{equation}

\section{Datasets}
\label{app:datasets}
Each dataset holds $10{,}000$ trajectories of one chaotic system, stored as tensors of $1001$ frames in a three-dimensional state space and split into $9000$ training, $500$ validation, and $500$ test trajectories.
We generate all five datasets using the official implementation of dysts \citep{gilpin2021chaos}. Since we target long-term behavior, every trajectory starts on the attractor, drawn by the on-attractor initial-condition sampler of dysts: it integrates one long reference orbit and takes the starting points from it, so the dataset is a set of overlapping windows of that orbit and inherits its invariant measure. We space the starting points by a fixed interval along the reference, since random draws would repeat some of them. Every window is a segment of the same autonomous flow, so absolute time carries no information about what comes next, and the data determines exactly the time-invariant operator of \cref{sec:method}. From each starting point, DOP853~\citep{hairer1993solving} then integrates the trajectory, keeping the estimated relative error of every step below $10^{-10}$.
We run each trajectory for five Lyapunov times, so consecutive frames are $0.005$ Lyapunov times apart; see \cref{subsec:implementation-details} for the normalization.
We split the trajectories in their order along the reference, so every test trajectory starts more than $155\Tlyap$ after the last training trajectory ends, with the validation trajectories in between.

\paragraph{Lyapunov exponents and times.}
A deterministic system is chaotic when its long-term behavior is bounded, aperiodic, and sensitive to initial conditions~\citep{sprott2003chaos}.
The sensitivity is quantified by the largest Lyapunov exponent: start two trajectories from a state $u_0$ and its perturbation $u_0 + \delta u_0$, write $\delta u_t$ for their difference at time $t$, and define, for almost every initial state on the attractor,
\begin{equation}
\lambda_{\max} = \lim_{t \to \infty} \lim_{\|\delta u_0\| \to 0} \frac{1}{t} \ln \frac{\|\delta u_t\|}{\|\delta u_0\|},
\label{eq:lyapunov-exponent}
\end{equation}
and sensitivity corresponds to $\lambda_{\max} > 0$: nearby states diverge exponentially fast.
Its inverse, the Lyapunov time $\Tlyap \triangleq 1/\lambda_{\max}$, is the time over which a small error grows by a factor of $e$, so it is the natural time unit for comparing systems that run at different speeds.
Because the horizon and every metric are denominated in Lyapunov times, $\lambda_{\max}$ is pinned to published values at exactly the constants we integrate, listed in \cref{tab:systems} together with the governing equations of each system.

\begin{table}[h]
\caption{\textbf{The five systems.}
Parameters, the largest Lyapunov exponent $\lambda_{\max}$, and its source.}
\label{tab:systems}
\centering
\small
\begin{tabular}{llll}
\toprule
System & Parameters & $\lambda_{\max}$ & Source \\
\midrule
Lorenz & $(\sigma, \rho, \beta) = (10,\, 28,\, 8/3)$ & $0.9056$ & \citet{sprott2003chaos} \\
Chen & $(a, b, c) = (35,\, 3,\, 28)$ & $2.0272$ & \citet{noorani2007comparing, eftekhari2012numerical} \\
R\"ossler & $(a, b, c) = (0.2,\, 0.2,\, 5.7)$ & $0.0714$ & \citet{sprott2003chaos} \\
Thomas & $b = 0.1$ & $0.055$ & \citet{sprott2007labyrinth} \\
Halvorsen & $a = 1.3$ & $0.6928$ & \citet{sprott2010elegant} \\
\bottomrule
\end{tabular}
\end{table}

\paragraph{Lorenz.}
The classical model of atmospheric convection, with its two-winged attractor~\citep{lorenz1963deterministic},
\begin{equation}
\left\{
\begin{gathered}
\dot{x} = \sigma (y - x), \\
\dot{y} = x (\rho - z) - y, \\
\dot{z} = x y - \beta z.
\end{gathered}
\right.
\end{equation}

\paragraph{Chen.}
A close relative of Lorenz with a stronger twist, chaotic more than twice as fast \citep{chen1999yet},
\begin{equation}
\left\{
\begin{gathered}
\dot{x} = a (y - x), \\
\dot{y} = (c - a) x - x z + c y, \\
\dot{z} = x y - b z.
\end{gathered}
\right.
\end{equation}

\paragraph{R\"ossler.}
The simplest spiral-type chaotic flow, with the single quadratic term $zx$~\citep{rossler1976equation},
\begin{equation}
\left\{
\begin{gathered}
\dot{x} = -y - z, \\
\dot{y} = x + a y, \\
\dot{z} = b + z (x - c).
\end{gathered}
\right.
\end{equation}

\paragraph{Thomas.}
A cyclically symmetric flow with sinusoidal feedback, known for labyrinth chaos~\citep{thomas1999deterministic},
\begin{equation}
\left\{
\begin{gathered}
\dot{x} = \sin y - b x, \\
\dot{y} = \sin z - b y, \\
\dot{z} = \sin x - b z.
\end{gathered}
\right.
\end{equation}

\paragraph{Halvorsen.}
A cyclically symmetric quadratic flow with a single strange attractor~\citep{sprott2010elegant},
\begin{equation}
\left\{
\begin{gathered}
\dot{x} = -a x - 4 y - 4 z - y^2, \\
\dot{y} = -a y - 4 z - 4 x - z^2, \\
\dot{z} = -a z - 4 x - 4 y - x^2.
\end{gathered}
\right.
\end{equation}

\FloatBarrier
\section{Evaluation metrics}
\label{app:metrics}
Here, we define each metric used in our experiments mathematically for short-term and long-term evaluation, and provide some implementation details for the long-term Markovian metrics.

\subsection{Evaluation setup}
\label{app:metrics-notation}

We compute every metric on the test set of $500$ held-out trajectories, each with $1001$ frames in a $d$-dimensional state space, measured in physical units.
In our chaotic-systems setting, we compare $d = 3$ dimensional systems with different Lyapunov time scales, with details in \cref{app:datasets}.
Here, time is indexed $0, \dots, 1000$ and spans exactly five Lyapunov times for each system, so frame $200$ sits at $1\Tlyap$ and frame $1000$ at $5\Tlyap$ on every system.
A hat marks a quantity produced by the model (prediction $\hat u$, power spectrum $\hat p$); the same symbol without a hat is its ground-truth counterpart.

\subsection{Relative $L_2$ error}
\label{app:metrics-relL2}

For short-term behavior, we first use the relative $L_2$ error as the pointwise metric, which follows the classical convention \citep{li2020fourier, kovachki2023neural, hou2025cfolearningcontinuoustimepde}.
For the $500$ test trajectories, we write $u_{i,j} \in \mathbb{R}^{d}$ for frame $j$ of the $i$-th test trajectory, and then we define the $k$-call prediction of the horizon $H$ (in frames) for the $i$-th trajectory by the feedback recursion
\begin{equation}
\hat u^{(0)}_i = u_{i,0},
\qquad
\hat u^{(n)}_i = M_\theta\big(\hat u^{(n-1)}_i,\, H/k\big),
\quad n = 1, \dots, k,
\label{eq:app-kstep}
\end{equation}
where $M_\theta$ represents the model's output under step size $H/k$, and we report the average over the per-trajectory relative $L_2$ errors,
\begin{equation}
\rell(H) \;=\; \frac{1}{500} \sum_{i=1}^{500}
\frac{\big\| \hat u^{(k)}_i - u_{i,H} \big\|_2}{\big\| u_{i,H} \big\|_2 + \varepsilon},
\qquad
\varepsilon = 10^{-8}.
\label{eq:app-rell}
\end{equation}
We report $H = 200$ (one Lyapunov time, \cref{app:tables}) and $H = 1000$ (five Lyapunov times), and each method uses the following $k$: the evolution operator at $k = 20$ and $k = 100$, AR at $k = 200$ and $k = 1000$, and CFO at the same $k$ as AR, with four times the evaluations from its RK4 solver \citep{kutta1901beitrag}.
We measure at the horizon rather than aggregating over the whole trajectory, as CFO \citep{hou2025cfolearningcontinuoustimepde} does.

\subsection{Long-term statistics}
\label{app:metrics-longrun}

In chaotic systems, nearby trajectories separate exponentially, so no prediction stays aligned with the true trajectory beyond a few Lyapunov times \citep{mikhaeil2022difficulty}.
The meaningful long-term target is instead the statistics of the motion on the attractor, so we also evaluate the model's output in distributional and transition aspects \citep{koppe2019identifying, gilpin2021chaos, hess2023generalized, wang2024closure, zhang2025zeroshot}.
From the $500$ test initial states, each model rolls $50\Tlyap$: AR and CFO roll $10{,}000$ native frames and keep every tenth, and the evolution operator rolls $1000$ native steps of $\dtmax = 0.05\Tlyap$ ($10$ frames), so every model lands on the same $500 \times 1000 \times 3$ array sampled every $10$ frames, while the evolution operator runs $10$ times cheaper than AR.
For the ground truth, the reference is the high-accuracy solution (DOP853 \citep{hairer1993solving}) at the same times, and the first $5\Tlyap$ are discarded; the remainder is flattened over trajectories and times into two clouds of $450{,}000$ states each, one from the model's runs and one from the reference's. The two clouds capture the model's and the truth's long-term behavior on the attractor.

\newpage
\subsection{Correlation dimension}
\label{app:metrics-corrdim}

The attractor is a fractal, and its correlation dimension measures how the number of neighboring state pairs grows with distance \citep{grassberger1983measuring}.
We write $u_n \in \mathbb{R}^{d}$ for the $n$-th state of the flattened reference cloud (\cref{app:metrics-longrun}), $N$ for the number of states, and $\#$ for the size of a set, and then the correlation sum
\begin{equation}
C(r) = \frac{\#\{(n, n') : n \neq n',\; \| u_n - u_{n'} \|_2 \le r\}}{N (N - 1)}
\label{eq:app-corrsum}
\end{equation}
scales as $r^{D_2}$ at small $r$, and the exponent $D_2$ is the correlation dimension.
We estimate it by subsampling each cloud to $10{,}000$ states and fitting the slope of $\log C(r)$ against $\log r$ by least squares \citep{datseris2023estimating}.
We report $\edtwo = |\hat D_2 - D_2|$, where $D_2$ is estimated from the reference cloud and $\hat D_2$ from the model's cloud.
The difference catches a model whose runs blur the attractor's fine structure into a thicker set, even when every state stays close to the attractor.

\subsection{State-space divergence}
\label{app:metrics-kl}

For the distributional distance, we use $D_{\mathrm{stsp}}$, which asks whether the model's states are distributed over the state space like the truth's \citep{koppe2019identifying}.
For each trajectory, we place a unit-variance Gaussian on every state, so the trajectory becomes a density over the state space.
$D_{\mathrm{stsp}}$ is the Kullback--Leibler (KL) divergence from the reference density $p$ to the model density $\hat p$,
\begin{equation}
D_{\mathrm{stsp}} = \int p(x) \log \frac{p(x)}{\hat p(x)} \, dx.
\label{eq:app-dstsp}
\end{equation}
The integral is estimated by Monte Carlo sampling \citep{hess2023generalized}, using the implementation of the dysts benchmark \citep{gilpin2021chaos}, and averaged over a fixed subsample of $64$ trajectory pairs.
Because the densities live in the full state space, this also catches joint structure that per-coordinate comparisons miss.

\subsection{Power-spectrum distance}
\label{app:metrics-psd}

For the model's output, we also compare the long-term temporal behavior through its power spectrum.
For each coordinate $c$, every trajectory is standardized over time, and its spectrum is the squared magnitude of its discrete Fourier transform.
The spectra are averaged over the $500$ trajectories, smoothed with a Gaussian kernel over frequency bins \citep{hess2023generalized}, and normalized to sum to one, giving a distribution over frequencies $p_c$ for the reference and $\hat p_c$ for the model.
The distance is the Hellinger distance averaged over the coordinates,
\begin{equation}
D_{\mathrm{H}}
= \frac{1}{d} \sum_{c} \sqrt{\frac{1}{2} \sum_{f} \Big( \sqrt{\hat p_c(f)} - \sqrt{p_c(f)} \Big)^{2}}
\;\in\; [0, 1],
\label{eq:app-psd}
\end{equation}
with $f$ running over the frequency bins.
It is zero when the spectra coincide and one when they do not overlap, so a model oscillating at the wrong frequencies scores high even if it stays on the attractor.

\subsection{Transition-matrix error}
\label{app:metrics-markov}

The metrics above check where the model spends its time and at which frequencies it oscillates, but not how it moves between regions of the attractor, so we also compare the switching dynamics \citep{ulam1960collection, prinz2011markov}: for Lorenz the two regions are the two wings of the butterfly \citep{lorenz1963deterministic}, and for the other systems they are the two halves found by clustering.
We split the reference cloud into two clusters by $k$-means \citep{lloyd1982least} and label every state of both runs by its nearest cluster center; a state too far from both centers has escaped the attractor and is labeled $\mathrm{OFF}$.
With $z_{i,m} \in \{0, 1, \mathrm{OFF}\}$ the label of trajectory $i$ at step $m$, the transition matrix is
\begin{equation}
P_{ab} = \frac{\#\{(i,m) : z_{i,m} = a,\; z_{i,m+1} = b\}}{\#\{(i,m) : z_{i,m} = a,\; z_{i,m+1} \neq \mathrm{OFF}\}},
\label{eq:app-markov}
\end{equation}
the probability that the next step (one step $=$ $10$ frames) lands in region $b$ given the current region $a$.
We report the transition-matrix error
\begin{equation}
\etrans = \tfrac{1}{2} \big( |\hat P_{01} - P_{01}| + |\hat P_{10} - P_{10}| \big),
\label{eq:app-markov-err}
\end{equation}
with $P$ estimated from the reference runs and $\hat P$ from the model, which asks whether the model switches between the two regions at the right rate.
\cref{alg:markov} states the full computation.

\begin{algorithm}[t]
\caption{Transition-matrix error.}
\label{alg:markov}
\begin{lstlisting}
def transition_error(U, U_hat):
  # U:     truth free run,  [B, N, d]
  # U_hat: model free run,  [B, N, d], same time grid
\end{lstlisting}
\vspace{2pt}
\begin{lstlisting}[backgroundcolor=\color{physhl}]
  ## split the truth attractor into two regions
  Z = standardize(U.reshape(-1, d))
  C = kmeans(Z, k=2).centers           # [2, d]
  r = 3 * cluster_radii(Z, C).max()    # escape threshold
\end{lstlisting}
\vspace{2pt}
\begin{lstlisting}[backgroundcolor=\color{physhl}]
  ## label every state, count transitions
  def assign(X):                       # labels in {0, 1, OFF}
    z, dist = nearest(standardize(X), C)   # uses U's mean/std
    return where(dist > r, OFF, z)

  def transition(X):
    z = assign(X)                                  # [B, N]
    s, t = z[:, :-1], z[:, 1:]                     # consecutive steps
    return [[mean(t[(s == a) & (t != OFF)] == b) for b in (0, 1)] for a in (0, 1)]
\end{lstlisting}
\vspace{2pt}
\begin{lstlisting}[backgroundcolor=\color{physhl}]
  ## compare the model's matrix to the truth's
  P = transition(U)
  P_hat = transition(U_hat)
  return (abs(P_hat[0][1] - P[0][1]) + abs(P_hat[1][0] - P[1][0])) / 2
\end{lstlisting}
\end{algorithm}

\FloatBarrier
\section{Implementation details}
\label{app:details}
\paragraph{Model architecture.}
We compare all methods on the same MLP with residual connections, using four blocks of hidden width $128$, and $341{,}379$ parameters in total.
Each block applies a LayerNorm, a GELU activation, and a linear layer. The output of each block is divided by four so that the time derivative with respect to $\Delta t$ does not grow with depth.
For time conditioning, the step size $\Delta t$ enters every block through AdaLN-Zero \citep{peebles2023scalable} on the sinusoidal features $[\Delta t, \sin(2\pi f_k \Delta t), \cos(2\pi f_k \Delta t)]$, with four frequencies $f_k$ spaced linearly in $[1, 8]$, and a two-layer network with a zero-initialized output layer maps them to a learnable scale and shift.

\paragraph{Method details.}
For the AR and CFO baselines, the model either predicts the instantaneous or average velocity ($v$-pred), which is multiplied by the time step $\Delta t$, or the next state ($x$-pred) directly.
For the evolution operator, the network outputs the averaged velocity $\bar{v}_{\theta}(u, \Delta t)$ and the operator itself is $\psi_{\theta}(u, \Delta t) = u + \Delta t\,\bar{v}_{\theta}(u, \Delta t)$, so one forward pass covers the whole horizon.

\paragraph{Hyperparameter tuning.}
All tuning uses the official implementation of Vizier\footnote{\url{https://github.com/google/vizier}} \citep{golovin2017vizier, song2022vizier}, and selects hyperparameters by the relative L2 error on the validation set.
For each task, the data-only control first searches the learning rate and the EMA decay over $12$ or $16$ trials, then fixes the best pair for every consistency objective, for which we search only the weight $w$ over $6$ trials.
The AR and CFO baselines search the learning rate alone over $6$ trials.
All searches run on seed $0$, and seeds $1$ and $2$ retrain the selected cells unchanged.
For the out-of-distribution tasks, we reuse the values tuned on the corresponding in-distribution task without any further search.

\paragraph{Training.}
\cref{tab:training} lists the training hyperparameters shared by all methods, which follow the official implementation of CFO\footnote{\url{https://github.com/shannon-hou/CFO_official}} \citep{hou2025cfolearningcontinuoustimepde}.
The AR baselines train on the fixed one-frame step and CFO on absolute time, as in their original implementations.
For the evolution operator, each training batch draws states $u_t$ from the training trajectories and step sizes $\Delta t$, with details in \cref{app:regimes}.
The model selected for evaluation is the checkpoint with the best validation relative L2 error, checked every $5000$ iterations, using the EMA weights for the evolution operator.

\begin{table}[t]
\caption{\textbf{Training settings.} Fixed settings above, and the ranges searched by Vizier below.}
\label{tab:training}
\centering
\small
\begin{tabular}{ll}
\toprule
Hyperparameter & Value \\
\midrule
\rowcolor{black!8}\multicolumn{2}{l}{\emph{Training}} \\
Optimizer & Adam ($\beta_1 = 0.9$, $\beta_2 = 0.99$) \citep{kingma2015adam} \\
LR scheduler & constant \\
Batch size & $2048$ \\
Iterations & $200{,}000$ \\
Validation interval & $5000$ iterations \\
Random seeds & $0$, $1$, $2$ \\
\addlinespace
\rowcolor{black!8}\multicolumn{2}{l}{\emph{Vizier search space}} \\
Learning rate & $[10^{-5}, 10^{-3}]$, log scale \\
EMA decay (evolution operator) & $\log_{10}(1 - \mathrm{ema}) \in [-4, -1]$ \\
Consistency weight $w$ & $\log_{10} w \in [-2, 1.5]$ \\
\bottomrule
\end{tabular}
\end{table}

\FloatBarrier
\section{Ablation studies}
\label{app:ablations}
\paragraph{Sampling distribution and largest step ($\dtmax$).}
\label{app:regimes}
We follow Hamiltonian Flow Maps \citep{ripken2026hamiltonian} and draw the training step from one frame up to $\dtmax$ as $0.98\,\mathrm{Beta}(1, 2) + 0.02\,\mathrm{Uniform}$, which we call Beta, and we ablate that choice here against uniform and log-uniform at two values of $\dtmax$ on the Lorenz tasks, with the results in \cref{fig:regimes}.
The largest step matters more than the distribution.
At $\dtmax = 5\Tlyap$ every error is around $0.5$ or worse whichever distribution we draw from, and at $\dtmax = 0.05\Tlyap$ the distribution moves the error of the data-only model and the ones fed their own velocity by about a factor of two, while the physics-distilled ones change much less.
We leave for future work what kind of sampling distribution suits this setting best.

\begin{figure}[!b]
\centering
\includegraphics[width=\linewidth, trim=0 6.5 0 6.5, clip]{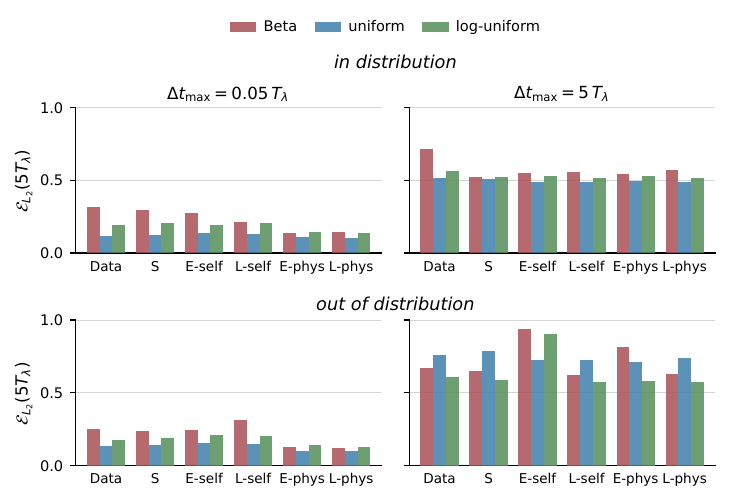}
\caption{\textbf{Sampling distribution and largest step.}
$\rell(5\Tlyap)$ on Lorenz at $p = 2$, averaged over three seeds, for the three distributions of the training step, and each panel pair splits by $\dtmax$.}
\label{fig:regimes}
\end{figure}

\paragraph{Adaptive weight coefficient ($p$).}
\label{app:ablation-p}
\cref{fig:p} sweeps the exponent of the graded consistency weight with the distribution fixed to Beta and $\dtmax = 0.05\Tlyap$.
The exponent moves the error far less than either of the other two settings, and the physics-distilled objectives lead at every value of it.

\begin{figure}[htbp]
\centering
\includegraphics[width=\linewidth, trim=0 6.5 0 6.5, clip]{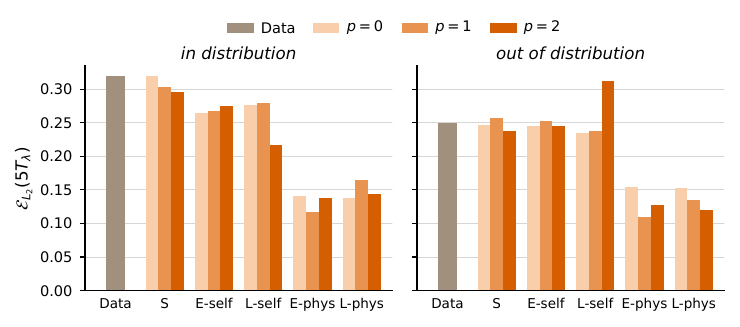}
\caption{\textbf{Adaptive weight coefficient.}
$\rell(5\Tlyap)$ on Lorenz under the Beta distribution at $\dtmax = 0.05\Tlyap$, averaged over three seeds.
The data-only model has no consistency term, so it is drawn once.}
\label{fig:p}
\end{figure}

\section{Additional results}
\label{app:tables}
\subsection{Wall-clock time}

In addition to measuring NFE, we also measure the wall-clock time to roll out five Lyapunov times on each of the five systems, with every method on the same network, listed in \cref{tab:wallclock}.
The evolution operators differ only in their training objective, so we time the data-only model for all of them.

\begin{table}[!h]
\caption{\textbf{Wall-clock time.}
Time to roll out one trajectory over five Lyapunov times on each of the five systems with a single NVIDIA RTX 4090, mean $\pm$ std over $100$ test trajectories, with each method at its native NFE.
Shading marks the evolution operator and bold the shortest time in each column.}
\label{tab:wallclock}
\centering
\small
\setlength{\tabcolsep}{4pt}
\begin{tabular}{lcccccc}
\toprule
 & & \multicolumn{5}{c}{Time (ms)} \\
\cmidrule(lr){3-7}
Method & NFE & Lorenz & Chen & R\"ossler & Thomas & Halvorsen \\
\midrule
AR $v$-pred & 1000 & 404.9 $\pm$ 1.5 & 396.9 $\pm$ 1.3 & 398.4 $\pm$ 1.7 & 401.0 $\pm$ 1.1 & 401.5 $\pm$ 1.1 \\
CFO & 4000 & 1668.7 $\pm$ 4.9 & 1636.4 $\pm$ 4.3 & 1642.2 $\pm$ 2.7 & 1650.3 $\pm$ 4.0 & 1665.5 $\pm$ 3.2 \\
\rowcolor{physhl}Evolution operator & 100 & \textbf{45.1 $\pm$ 0.1} & \textbf{44.3 $\pm$ 0.2} & \textbf{44.0 $\pm$ 0.1} & \textbf{45.0 $\pm$ 0.8} & \textbf{44.8 $\pm$ 0.2} \\
\bottomrule
\end{tabular}
\end{table}

\FloatBarrier
\subsection{Short horizon results}

\cref{tab:main-avg} reports the endpoint at five Lyapunov times, and by then every model has drifted far from the truth. Here we add the accuracy at one Lyapunov time, where they are all still close to it.

\begin{table}[!h]
\caption{\textbf{One-Lyapunov-time accuracy.}
$\rell(1\Tlyap)$, $\times 10^{-2}$, mean $\pm$ std over three seeds, with each method at its native step size, stated in the NFE column. Shading marks the physics-distilled block and bold the lowest error in each column of a block.}
\label{tab:1lyap}
\centering
\small
\setlength{\tabcolsep}{7pt}
\resizebox{\linewidth}{!}{\begin{tabular}{lrcccccccccc}
\toprule
 & & \multicolumn{2}{c}{Lorenz} & \multicolumn{2}{c}{Chen} & \multicolumn{2}{c}{R\"ossler} & \multicolumn{2}{c}{Thomas} & \multicolumn{2}{c}{Halvorsen} \\
\cmidrule(lr){3-4} \cmidrule(lr){5-6} \cmidrule(lr){7-8} \cmidrule(lr){9-10} \cmidrule(lr){11-12}
Method & NFE & in-dist & OOD & in-dist & OOD & in-dist & OOD & in-dist & OOD & in-dist & OOD \\
\midrule
\rowcolor{black!8}\multicolumn{12}{l}{\emph{Evolution operator}} \\
Data & 20 & 0.99{\scriptsize\color{black!55}$\pm$0.27} & 0.71{\scriptsize\color{black!55}$\pm$0.10} & 0.66{\scriptsize\color{black!55}$\pm$0.03} & 0.64{\scriptsize\color{black!55}$\pm$0.09} & 3.1{\scriptsize\color{black!55}$\pm$0.8} & 2.9{\scriptsize\color{black!55}$\pm$0.3} & 2.8{\scriptsize\color{black!55}$\pm$0.5} & 2.5{\scriptsize\color{black!55}$\pm$0.4} & 1.4{\scriptsize\color{black!55}$\pm$0.4} & 1.3{\scriptsize\color{black!55}$\pm$0.1} \\
Semigroup & 20 & 0.97{\scriptsize\color{black!55}$\pm$0.17} & 0.63{\scriptsize\color{black!55}$\pm$0.09} & 0.70{\scriptsize\color{black!55}$\pm$0.04} & 0.67{\scriptsize\color{black!55}$\pm$0.10} & 3.3{\scriptsize\color{black!55}$\pm$0.8} & 3.1{\scriptsize\color{black!55}$\pm$0.9} & 3.1{\scriptsize\color{black!55}$\pm$1.2} & 2.6{\scriptsize\color{black!55}$\pm$0.3} & 1.5{\scriptsize\color{black!55}$\pm$0.4} & 1.2{\scriptsize\color{black!55}$\pm$0.2} \\
Eulerian-self & 20 & 0.91{\scriptsize\color{black!55}$\pm$0.39} & 0.73{\scriptsize\color{black!55}$\pm$0.35} & 0.64{\scriptsize\color{black!55}$\pm$0.13} & 0.74{\scriptsize\color{black!55}$\pm$0.22} & 3.4{\scriptsize\color{black!55}$\pm$0.3} & 2.5{\scriptsize\color{black!55}$\pm$0.6} & 2.5{\scriptsize\color{black!55}$\pm$0.3} & 2.2{\scriptsize\color{black!55}$\pm$0.1} & 1.2{\scriptsize\color{black!55}$\pm$0.2} & 1.0{\scriptsize\color{black!55}$\pm$0.1} \\
Lagrangian-self & 20 & 0.71{\scriptsize\color{black!55}$\pm$0.19} & 0.83{\scriptsize\color{black!55}$\pm$0.19} & 1.1{\scriptsize\color{black!55}$\pm$0.4} & 0.86{\scriptsize\color{black!55}$\pm$0.24} & 3.5{\scriptsize\color{black!55}$\pm$1.1} & 2.8{\scriptsize\color{black!55}$\pm$0.5} & 2.4{\scriptsize\color{black!55}$\pm$0.7} & 2.4{\scriptsize\color{black!55}$\pm$0.3} & 1.5{\scriptsize\color{black!55}$\pm$0.7} & 1.5{\scriptsize\color{black!55}$\pm$0.2} \\
\rowcolor{physhl}Eulerian-phys & 20 & 0.36{\scriptsize\color{black!55}$\pm$0.21} & 0.44{\scriptsize\color{black!55}$\pm$0.30} & \textbf{0.18{\scriptsize\color{black!55}$\pm$0.01}} & \textbf{0.16{\scriptsize\color{black!55}$\pm$0.02}} & \textbf{1.4{\scriptsize\color{black!55}$\pm$0.1}} & \textbf{1.2{\scriptsize\color{black!55}$\pm$0.2}} & \textbf{0.79{\scriptsize\color{black!55}$\pm$0.04}} & \textbf{0.82{\scriptsize\color{black!55}$\pm$0.10}} & \textbf{0.36{\scriptsize\color{black!55}$\pm$0.02}} & \textbf{0.38{\scriptsize\color{black!55}$\pm$0.05}} \\
\rowcolor{physhl}Lagrangian-phys & 20 & \textbf{0.29{\scriptsize\color{black!55}$\pm$0.05}} & \textbf{0.31{\scriptsize\color{black!55}$\pm$0.13}} & 0.25{\scriptsize\color{black!55}$\pm$0.04} & 0.24{\scriptsize\color{black!55}$\pm$0.03} & 3.0{\scriptsize\color{black!55}$\pm$1.6} & 2.0{\scriptsize\color{black!55}$\pm$0.2} & 0.93{\scriptsize\color{black!55}$\pm$0.14} & 0.86{\scriptsize\color{black!55}$\pm$0.16} & 0.57{\scriptsize\color{black!55}$\pm$0.02} & 0.51{\scriptsize\color{black!55}$\pm$0.08} \\
\midrule
\rowcolor{black!8}\multicolumn{12}{l}{\emph{Baselines}} \\
AR $v$-pred & 200 & \textbf{0.22{\scriptsize\color{black!55}$\pm$0.07}} & \textbf{0.19{\scriptsize\color{black!55}$\pm$0.01}} & \textbf{0.32{\scriptsize\color{black!55}$\pm$0.09}} & \textbf{0.36{\scriptsize\color{black!55}$\pm$0.10}} & \textbf{0.53{\scriptsize\color{black!55}$\pm$0.14}} & \textbf{0.59{\scriptsize\color{black!55}$\pm$0.10}} & 2.6{\scriptsize\color{black!55}$\pm$1.2} & 2.2{\scriptsize\color{black!55}$\pm$0.8} & \textbf{0.29{\scriptsize\color{black!55}$\pm$0.01}} & \textbf{0.29{\scriptsize\color{black!55}$\pm$0.04}} \\
AR $x$-pred (cond.\ $t$) & 200 & 2.7{\scriptsize\color{black!55}$\pm$1.2} & 1.3{\scriptsize\color{black!55}$\pm$0.3} & 2.3{\scriptsize\color{black!55}$\pm$0.8} & 1.8{\scriptsize\color{black!55}$\pm$0.6} & 9.1{\scriptsize\color{black!55}$\pm$2.6} & 3.3{\scriptsize\color{black!55}$\pm$0.4} & 12{\scriptsize\color{black!55}$\pm$3} & 11{\scriptsize\color{black!55}$\pm$2} & 3.4{\scriptsize\color{black!55}$\pm$1.1} & 2.5{\scriptsize\color{black!55}$\pm$0.5} \\
AR $x$-pred (uncond.\ $t$) & 200 & 2.2{\scriptsize\color{black!55}$\pm$1.7} & 2.4{\scriptsize\color{black!55}$\pm$1.0} & 2.6{\scriptsize\color{black!55}$\pm$0.3} & 2.9{\scriptsize\color{black!55}$\pm$1.7} & 2.6{\scriptsize\color{black!55}$\pm$0.9} & 3.8{\scriptsize\color{black!55}$\pm$0.2} & 15{\scriptsize\color{black!55}$\pm$5} & 6.3{\scriptsize\color{black!55}$\pm$1.5} & 4.1{\scriptsize\color{black!55}$\pm$1.7} & 2.3{\scriptsize\color{black!55}$\pm$0.6} \\
CFO & 800 & 0.32{\scriptsize\color{black!55}$\pm$0.03} & 0.24{\scriptsize\color{black!55}$\pm$0.03} & 0.39{\scriptsize\color{black!55}$\pm$0.11} & \textbf{0.36{\scriptsize\color{black!55}$\pm$0.12}} & 2.5{\scriptsize\color{black!55}$\pm$1.0} & 1.6{\scriptsize\color{black!55}$\pm$0.4} & \textbf{1.6{\scriptsize\color{black!55}$\pm$0.3}} & \textbf{1.3{\scriptsize\color{black!55}$\pm$0.2}} & 0.33{\scriptsize\color{black!55}$\pm$0.06} & 0.41{\scriptsize\color{black!55}$\pm$0.01} \\
\bottomrule
\end{tabular}
}%
\end{table}

\FloatBarrier
\subsection{Per-system results}

\cref{tab:main-indist,tab:main-ood} list the numbers behind \cref{tab:main-avg}, one for the models trained on the whole horizon and one for the models trained on half of it.
Each gives all five metrics on each of the five systems.
Besides finding that physics-distillation is a better teacher for learning this evolution operator, we also find that AR $v$-pred is a good choice for autoregressive rollout on chaotic dynamics.

\vspace{\stretch{1}}
\begin{table}[!h]
\caption{\textbf{In-distribution results.}
Five metrics on five chaotic systems, averaged over three seeds, with $D_{\mathrm{H}}$ and $\edtwo$ in $10^{-2}$ and $\etrans$ in $10^{-3}$.
Shading marks the physics-distilled rows and bold the best method in each column of a block.
\cref{app:metrics,app:details} define the metrics and the training setup.}
\label{tab:main-indist}
\centering
\scriptsize
\setlength{\tabcolsep}{2.32pt}
\sbox0{%
\begin{tabular}{lrccccccccccccccc}
\toprule
 & & \multicolumn{5}{c}{Lorenz} & \multicolumn{5}{c}{Chen} & \multicolumn{5}{c}{R\"ossler} \\
\cmidrule(lr){3-7} \cmidrule(lr){8-12} \cmidrule(lr){13-17}
Method & NFE & $\rell$ & $D_{\mathrm{stsp}}$ & $D_{\mathrm{H}}$ & $\edtwo$ & $\etrans$ & $\rell$ & $D_{\mathrm{stsp}}$ & $D_{\mathrm{H}}$ & $\edtwo$ & $\etrans$ & $\rell$ & $D_{\mathrm{stsp}}$ & $D_{\mathrm{H}}$ & $\edtwo$ & $\etrans$ \\
\midrule
\rowcolor{black!8}\multicolumn{17}{l}{\emph{Evolution operator}} \\
Data & 100 & 0.32 & 0.36 & 1.2 & 0.52 & 0.62 & 0.26 & 0.63 & 2.0 & 1.3 & 0.55 & 0.53 & 0.19 & 8.0 & 29 & 28 \\
Semigroup & 100 & 0.30 & \textbf{0.32} & 1.2 & 0.83 & 0.65 & 0.27 & 0.60 & 2.1 & 1.4 & 0.70 & 0.54 & 0.30 & 10 & 40 & 27 \\
Eulerian-self & 100 & 0.27 & 0.39 & 1.2 & \textbf{0.31} & 0.46 & 0.27 & \textbf{0.58} & 2.2 & 0.77 & 0.63 & 0.56 & 0.31 & 10 & 45 & 14 \\
Lagrangian-self & 100 & 0.22 & 0.37 & 1.1 & 0.59 & 0.49 & 0.31 & 0.62 & 1.9 & 1.4 & \textbf{0.40} & 0.56 & 0.27 & 10 & 40 & 13 \\
\rowcolor{physhl}Eulerian-phys & 100 & \textbf{0.14} & 0.37 & \textbf{1.0} & 0.47 & 0.47 & \textbf{0.15} & 0.61 & 1.6 & \textbf{0.46} & 0.41 & \textbf{0.45} & \textbf{0.10} & \textbf{2.4} & \textbf{5.2} & \textbf{2.8} \\
\rowcolor{physhl}Lagrangian-phys & 100 & \textbf{0.14} & 0.38 & \textbf{1.0} & 0.48 & \textbf{0.43} & \textbf{0.15} & 0.60 & \textbf{1.4} & 0.78 & 0.60 & 0.52 & 0.38 & 9.7 & 51 & 8.0 \\
\midrule
\rowcolor{black!8}\multicolumn{17}{l}{\emph{Baselines}} \\
AR $v$-pred & 1000 & \textbf{0.11} & 0.38 & \textbf{1.0} & 0.46 & 0.61 & \textbf{0.19} & 0.62 & 1.8 & \textbf{0.59} & \textbf{0.54} & \textbf{0.27} & \textbf{0.096} & \textbf{1.6} & \textbf{1.4} & \textbf{0.22} \\
AR $x$-pred (cond.\ $t$) & 1000 & 0.50 & \textbf{0.36} & 1.3 & 0.49 & 0.87 & 0.45 & 0.69 & 3.6 & 1.2 & 0.78 & 0.61 & 0.17 & 15 & 35 & 8.7 \\
AR $x$-pred (uncond.\ $t$) & 1000 & 0.42 & 0.37 & 1.3 & \textbf{0.30} & \textbf{0.47} & 0.47 & 0.66 & 2.9 & 1.1 & 1.4 & 0.49 & 0.099 & 6.0 & 5.8 & 2.0 \\
CFO & 4000 & 0.14 & \textbf{0.36} & 1.1 & 0.77 & 0.65 & \textbf{0.19} & \textbf{0.58} & \textbf{1.7} & 0.71 & 0.73 & 0.47 & 0.11 & 3.5 & 2.6 & 0.74 \\
\bottomrule
\end{tabular}%
}%
\sbox2{%
\begin{tabular}{lrcccccccccc}
\toprule
 & & \multicolumn{5}{c}{Thomas} & \multicolumn{5}{c}{Halvorsen} \\
\cmidrule(lr){3-7} \cmidrule(lr){8-12}
Method & NFE & $\rell$ & $D_{\mathrm{stsp}}$ & $D_{\mathrm{H}}$ & $\edtwo$ & $\etrans$ & $\rell$ & $D_{\mathrm{stsp}}$ & $D_{\mathrm{H}}$ & $\edtwo$ & $\etrans$ \\
\midrule
\rowcolor{black!8}\multicolumn{12}{l}{\emph{Evolution operator}} \\
Data & 100 & 0.90 & 0.14 & 5.1 & 2.2 & \textbf{1.8} & 0.56 & 0.048 & 3.3 & 2.2 & 0.89 \\
Semigroup & 100 & 0.88 & \textbf{0.13} & 4.5 & 2.1 & 2.4 & 0.55 & 0.050 & 3.7 & 3.9 & 0.74 \\
Eulerian-self & 100 & 0.89 & 0.16 & 6.4 & 27 & 3.2 & 0.50 & \textbf{0.046} & 3.2 & 2.6 & 0.86 \\
Lagrangian-self & 100 & 0.85 & 0.15 & 5.6 & 1.7 & 2.6 & 0.55 & 0.049 & 3.8 & 6.0 & 1.1 \\
\rowcolor{physhl}Eulerian-phys & 100 & 0.64 & \textbf{0.13} & 3.8 & 0.85 & 2.2 & \textbf{0.31} & \textbf{0.046} & \textbf{2.1} & \textbf{0.75} & 0.57 \\
\rowcolor{physhl}Lagrangian-phys & 100 & \textbf{0.62} & \textbf{0.13} & \textbf{3.5} & \textbf{0.73} & 2.2 & 0.39 & 0.047 & 2.7 & 1.7 & \textbf{0.54} \\
\midrule
\rowcolor{black!8}\multicolumn{12}{l}{\emph{Baselines}} \\
AR $v$-pred & 1000 & 0.82 & 0.14 & 4.5 & 1.1 & \textbf{1.2} & 0.27 & \textbf{0.045} & 1.3 & 3.0 & \textbf{0.28} \\
AR $x$-pred (cond.\ $t$) & 1000 & 1.1 & 2.2 & 9.3 & 75 & 6.1 & 0.68 & 0.052 & 3.6 & 2.9 & 1.2 \\
AR $x$-pred (uncond.\ $t$) & 1000 & 1.1 & 0.59 & 12 & 32 & 4.6 & 0.72 & 0.053 & 5.4 & 8.9 & 1.8 \\
CFO & 4000 & \textbf{0.80} & \textbf{0.13} & \textbf{3.4} & \textbf{0.31} & 1.6 & \textbf{0.25} & 0.048 & \textbf{1.1} & \textbf{2.8} & 0.68 \\
\bottomrule
\end{tabular}%
}%
\usebox0\par\vspace{6pt}
{\setlength{\tabcolsep}{\dimexpr\tabcolsep+(\wd0-\wd2)/24\relax}%
\begin{tabular}{lrcccccccccc}
\toprule
 & & \multicolumn{5}{c}{Thomas} & \multicolumn{5}{c}{Halvorsen} \\
\cmidrule(lr){3-7} \cmidrule(lr){8-12}
Method & NFE & $\rell$ & $D_{\mathrm{stsp}}$ & $D_{\mathrm{H}}$ & $\edtwo$ & $\etrans$ & $\rell$ & $D_{\mathrm{stsp}}$ & $D_{\mathrm{H}}$ & $\edtwo$ & $\etrans$ \\
\midrule
\rowcolor{black!8}\multicolumn{12}{l}{\emph{Evolution operator}} \\
Data & 100 & 0.90 & 0.14 & 5.1 & 2.2 & \textbf{1.8} & 0.56 & 0.048 & 3.3 & 2.2 & 0.89 \\
Semigroup & 100 & 0.88 & \textbf{0.13} & 4.5 & 2.1 & 2.4 & 0.55 & 0.050 & 3.7 & 3.9 & 0.74 \\
Eulerian-self & 100 & 0.89 & 0.16 & 6.4 & 27 & 3.2 & 0.50 & \textbf{0.046} & 3.2 & 2.6 & 0.86 \\
Lagrangian-self & 100 & 0.85 & 0.15 & 5.6 & 1.7 & 2.6 & 0.55 & 0.049 & 3.8 & 6.0 & 1.1 \\
\rowcolor{physhl}Eulerian-phys & 100 & 0.64 & \textbf{0.13} & 3.8 & 0.85 & 2.2 & \textbf{0.31} & \textbf{0.046} & \textbf{2.1} & \textbf{0.75} & 0.57 \\
\rowcolor{physhl}Lagrangian-phys & 100 & \textbf{0.62} & \textbf{0.13} & \textbf{3.5} & \textbf{0.73} & 2.2 & 0.39 & 0.047 & 2.7 & 1.7 & \textbf{0.54} \\
\midrule
\rowcolor{black!8}\multicolumn{12}{l}{\emph{Baselines}} \\
AR $v$-pred & 1000 & 0.82 & 0.14 & 4.5 & 1.1 & \textbf{1.2} & 0.27 & \textbf{0.045} & 1.3 & 3.0 & \textbf{0.28} \\
AR $x$-pred (cond.\ $t$) & 1000 & 1.1 & 2.2 & 9.3 & 75 & 6.1 & 0.68 & 0.052 & 3.6 & 2.9 & 1.2 \\
AR $x$-pred (uncond.\ $t$) & 1000 & 1.1 & 0.59 & 12 & 32 & 4.6 & 0.72 & 0.053 & 5.4 & 8.9 & 1.8 \\
CFO & 4000 & \textbf{0.80} & \textbf{0.13} & \textbf{3.4} & \textbf{0.31} & 1.6 & \textbf{0.25} & 0.048 & \textbf{1.1} & \textbf{2.8} & 0.68 \\
\bottomrule
\end{tabular}%
}%
\end{table}
\vspace{\stretch{3}}

\begin{table}[p]
\vbox to \textheight{\vfill%
\caption{\textbf{Out-of-distribution results.}
The same five metrics on the five systems, averaged over three seeds, with $D_{\mathrm{H}}$ and $\edtwo$ in $10^{-2}$ and $\etrans$ in $10^{-3}$.
Every model is trained on the first half of the horizon and rolled out to all of it.
Shading marks the physics-distilled rows and bold the best method in each column of a block.
\cref{app:metrics,app:details} define the metrics and the training setup.}
\label{tab:main-ood}
\centering
\scriptsize
\setlength{\tabcolsep}{2.32pt}
\sbox0{%
\begin{tabular}{lrccccccccccccccc}
\toprule
 & & \multicolumn{5}{c}{Lorenz} & \multicolumn{5}{c}{Chen} & \multicolumn{5}{c}{R\"ossler} \\
\cmidrule(lr){3-7} \cmidrule(lr){8-12} \cmidrule(lr){13-17}
Method & NFE & $\rell$ & $D_{\mathrm{stsp}}$ & $D_{\mathrm{H}}$ & $\edtwo$ & $\etrans$ & $\rell$ & $D_{\mathrm{stsp}}$ & $D_{\mathrm{H}}$ & $\edtwo$ & $\etrans$ & $\rell$ & $D_{\mathrm{stsp}}$ & $D_{\mathrm{H}}$ & $\edtwo$ & $\etrans$ \\
\midrule
\rowcolor{black!8}\multicolumn{17}{l}{\emph{Evolution operator}} \\
Data & 100 & 0.25 & 0.38 & 1.0 & \textbf{0.28} & 0.45 & 0.27 & 0.61 & 2.8 & 0.67 & 0.98 & 0.53 & 0.14 & 7.7 & 27 & 4.9 \\
Semigroup & 100 & 0.24 & 0.39 & 1.2 & 0.31 & 0.52 & 0.29 & 0.63 & 2.2 & 0.78 & 0.71 & 0.54 & 0.12 & 6.7 & 14 & 3.7 \\
Eulerian-self & 100 & 0.24 & 0.36 & 1.2 & 0.72 & 0.55 & 0.30 & \textbf{0.57} & 2.2 & \textbf{0.38} & 1.4 & 0.50 & 0.11 & 3.3 & 10 & 6.1 \\
Lagrangian-self & 100 & 0.31 & \textbf{0.35} & 1.1 & 0.38 & 0.44 & 0.32 & 0.60 & 2.5 & 0.90 & \textbf{0.45} & 0.54 & 0.13 & 4.8 & 19 & \textbf{3.5} \\
\rowcolor{physhl}Eulerian-phys & 100 & 0.13 & \textbf{0.35} & 1.1 & 0.38 & \textbf{0.37} & \textbf{0.13} & 0.58 & \textbf{1.5} & 0.50 & 0.65 & \textbf{0.44} & 0.12 & \textbf{2.4} & \textbf{4.2} & 3.9 \\
\rowcolor{physhl}Lagrangian-phys & 100 & \textbf{0.12} & 0.42 & \textbf{0.95} & 0.53 & 0.69 & 0.17 & 0.63 & 1.7 & 0.78 & 0.63 & 0.51 & \textbf{0.10} & 2.9 & 8.2 & 4.8 \\
\midrule
\rowcolor{black!8}\multicolumn{17}{l}{\emph{Baselines}} \\
AR $v$-pred & 1000 & \textbf{0.094} & \textbf{0.39} & \textbf{1.0} & 1.1 & 0.57 & \textbf{0.21} & \textbf{0.57} & \textbf{1.7} & \textbf{0.91} & 0.64 & \textbf{0.30} & 0.098 & \textbf{1.7} & \textbf{0.78} & \textbf{0.26} \\
AR $x$-pred (cond.\ $t$) & 1000 & 0.75 & 64 & 41 & 32 & 21 & 1.0 & 1.5 & 34 & 21 & 19 & 0.71 & \textbf{0.089} & 8.4 & 8.7 & 1.7 \\
AR $x$-pred (uncond.\ $t$) & 1000 & 0.39 & 0.41 & 1.3 & \textbf{0.34} & \textbf{0.29} & 0.45 & 0.65 & 3.7 & 1.5 & 1.5 & 0.57 & 0.093 & 5.8 & 4.0 & 2.9 \\
CFO & 4000 & 0.55 & 0.44 & 3.8 & 1.0 & 2.2 & 0.30 & 0.67 & 2.8 & 1.1 & \textbf{0.39} & 8.7 & 24 & 81 & 71 & 150 \\
\bottomrule
\end{tabular}%
}%
\sbox2{%
\begin{tabular}{lrcccccccccc}
\toprule
 & & \multicolumn{5}{c}{Thomas} & \multicolumn{5}{c}{Halvorsen} \\
\cmidrule(lr){3-7} \cmidrule(lr){8-12}
Method & NFE & $\rell$ & $D_{\mathrm{stsp}}$ & $D_{\mathrm{H}}$ & $\edtwo$ & $\etrans$ & $\rell$ & $D_{\mathrm{stsp}}$ & $D_{\mathrm{H}}$ & $\edtwo$ & $\etrans$ \\
\midrule
\rowcolor{black!8}\multicolumn{12}{l}{\emph{Evolution operator}} \\
Data & 100 & 0.92 & 0.13 & 4.3 & 1.1 & \textbf{0.88} & 0.52 & 0.047 & \textbf{2.2} & 1.7 & 0.90 \\
Semigroup & 100 & 0.90 & 0.14 & 5.2 & 0.90 & 2.1 & 0.52 & 0.047 & 2.5 & \textbf{1.1} & 1.1 \\
Eulerian-self & 100 & 0.83 & 0.13 & 4.8 & 1.2 & 1.9 & 0.49 & 0.048 & 3.0 & 3.1 & 0.53 \\
Lagrangian-self & 100 & 0.87 & 0.14 & 4.1 & 0.77 & 1.5 & 0.56 & 0.049 & 4.0 & 5.8 & 0.72 \\
\rowcolor{physhl}Eulerian-phys & 100 & \textbf{0.61} & \textbf{0.12} & 3.7 & \textbf{0.46} & 2.3 & \textbf{0.32} & 0.047 & 2.5 & 1.3 & 0.84 \\
\rowcolor{physhl}Lagrangian-phys & 100 & 0.62 & 0.13 & \textbf{2.9} & 1.0 & 1.7 & 0.37 & \textbf{0.044} & 2.3 & 1.3 & \textbf{0.42} \\
\midrule
\rowcolor{black!8}\multicolumn{12}{l}{\emph{Baselines}} \\
AR $v$-pred & 1000 & \textbf{0.78} & \textbf{0.13} & \textbf{3.5} & \textbf{1.0} & \textbf{1.4} & \textbf{0.25} & \textbf{0.046} & \textbf{1.1} & \textbf{3.3} & \textbf{0.23} \\
AR $x$-pred (cond.\ $t$) & 1000 & 2.1 & 14 & 34 & 84 & 25 & 1.8 & 6.3 & 44 & 110 & 150 \\
AR $x$-pred (uncond.\ $t$) & 1000 & 1.1 & 0.57 & 7.8 & 13 & 2.1 & 0.57 & 0.049 & 3.9 & 6.7 & 1.4 \\
CFO & 4000 & 2.1 & 22 & 38 & 88 & 8.9 & 0.88 & 0.10 & 8.5 & 17 & 2.3 \\
\bottomrule
\end{tabular}%
}%
\usebox0\par\vspace{6pt}
{\setlength{\tabcolsep}{\dimexpr\tabcolsep+(\wd0-\wd2)/24\relax}%
\begin{tabular}{lrcccccccccc}
\toprule
 & & \multicolumn{5}{c}{Thomas} & \multicolumn{5}{c}{Halvorsen} \\
\cmidrule(lr){3-7} \cmidrule(lr){8-12}
Method & NFE & $\rell$ & $D_{\mathrm{stsp}}$ & $D_{\mathrm{H}}$ & $\edtwo$ & $\etrans$ & $\rell$ & $D_{\mathrm{stsp}}$ & $D_{\mathrm{H}}$ & $\edtwo$ & $\etrans$ \\
\midrule
\rowcolor{black!8}\multicolumn{12}{l}{\emph{Evolution operator}} \\
Data & 100 & 0.92 & 0.13 & 4.3 & 1.1 & \textbf{0.88} & 0.52 & 0.047 & \textbf{2.2} & 1.7 & 0.90 \\
Semigroup & 100 & 0.90 & 0.14 & 5.2 & 0.90 & 2.1 & 0.52 & 0.047 & 2.5 & \textbf{1.1} & 1.1 \\
Eulerian-self & 100 & 0.83 & 0.13 & 4.8 & 1.2 & 1.9 & 0.49 & 0.048 & 3.0 & 3.1 & 0.53 \\
Lagrangian-self & 100 & 0.87 & 0.14 & 4.1 & 0.77 & 1.5 & 0.56 & 0.049 & 4.0 & 5.8 & 0.72 \\
\rowcolor{physhl}Eulerian-phys & 100 & \textbf{0.61} & \textbf{0.12} & 3.7 & \textbf{0.46} & 2.3 & \textbf{0.32} & 0.047 & 2.5 & 1.3 & 0.84 \\
\rowcolor{physhl}Lagrangian-phys & 100 & 0.62 & 0.13 & \textbf{2.9} & 1.0 & 1.7 & 0.37 & \textbf{0.044} & 2.3 & 1.3 & \textbf{0.42} \\
\midrule
\rowcolor{black!8}\multicolumn{12}{l}{\emph{Baselines}} \\
AR $v$-pred & 1000 & \textbf{0.78} & \textbf{0.13} & \textbf{3.5} & \textbf{1.0} & \textbf{1.4} & \textbf{0.25} & \textbf{0.046} & \textbf{1.1} & \textbf{3.3} & \textbf{0.23} \\
AR $x$-pred (cond.\ $t$) & 1000 & 2.1 & 14 & 34 & 84 & 25 & 1.8 & 6.3 & 44 & 110 & 150 \\
AR $x$-pred (uncond.\ $t$) & 1000 & 1.1 & 0.57 & 7.8 & 13 & 2.1 & 0.57 & 0.049 & 3.9 & 6.7 & 1.4 \\
CFO & 4000 & 2.1 & 22 & 38 & 88 & 8.9 & 0.88 & 0.10 & 8.5 & 17 & 2.3 \\
\bottomrule
\end{tabular}%
}%
\vfill}%
\end{table}

\FloatBarrier
\subsection{Error curves}

That endpoint is one point on a curve, and \cref{fig:err-curves} shows the whole of it, in distribution and out of it.
Every model runs at its own step size, so the evolution operator spends ten times fewer evaluations than AR $v$-pred and forty times fewer than CFO.

\begin{figure}[h]
\centering
\includegraphics[width=\linewidth, trim=0 26.5 0 7.0, clip]{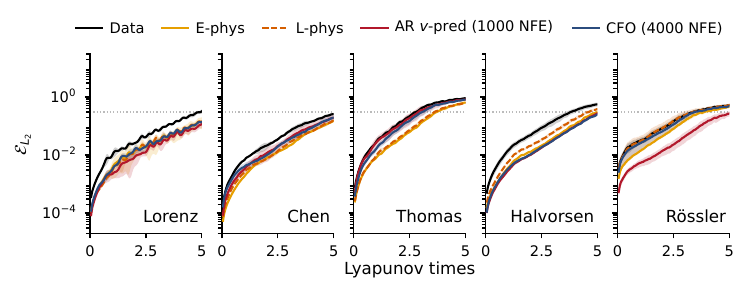}\\[1pt]
\includegraphics[width=\linewidth, trim=0 7.4 0 21.0, clip]{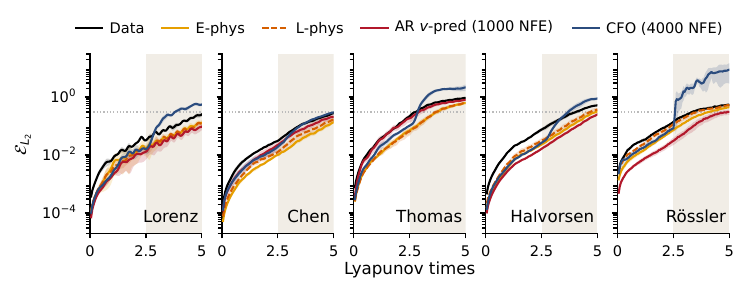}
\caption{\textbf{Error curves.}
$\rell$ against time on the five systems, averaged over three seeds with one-standard-deviation bands, and the dotted line is the relative error $0.3$.
Top: in distribution.
Bottom: out of distribution, with the half beyond the training window shaded.}
\label{fig:err-curves}
\end{figure}

\FloatBarrier
\subsection{Combined differential identities}
\label{app:el}

We also try to use both differential identities of \cref{eq:eul,eq:lag} for training the evolution operator, as follows:
\begin{equation}
\mathcal{L}(\theta)=\mathcal{L}_{\mathrm{data}}(\theta)+w_1\,\mathcal{L}^{(p)}_{\mathrm{E}}(\theta)+w_2\,\mathcal{L}^{(p)}_{\mathrm{L}}(\theta),
\qquad w_1,w_2\ge 0,
\label{eq:loss-el}
\end{equation}
with the two residuals of \cref{eq:loss-eulerian,eq:loss-lagrangian} at $p=2$.
Vizier searches both weights on the validation set over more trials than for one identity, with every other setting of \cref{app:details} unchanged.
\cref{tab:el} compares all the original Eulerian and Lagrangian objectives with their combined version.

\begin{table}[!h]
\caption{\textbf{Combined differential identities.}
$\rell(5\Tlyap)$ on each of the five systems, mean $\pm$ std over three seeds, for each differential identity on its own and for the combination of \cref{eq:loss-el}. Shading marks the physics-distilled block and bold the lowest error in each column of a block.}
\label{tab:el}
\centering
\small
\setlength{\tabcolsep}{7pt}
\resizebox{\linewidth}{!}{\begin{tabular}{lcccccccccc}
\toprule
 & \multicolumn{2}{c}{Lorenz} & \multicolumn{2}{c}{Chen} & \multicolumn{2}{c}{R\"ossler} & \multicolumn{2}{c}{Thomas} & \multicolumn{2}{c}{Halvorsen} \\
\cmidrule(lr){2-3} \cmidrule(lr){4-5} \cmidrule(lr){6-7} \cmidrule(lr){8-9} \cmidrule(lr){10-11}
Consistency term & in-dist & OOD & in-dist & OOD & in-dist & OOD & in-dist & OOD & in-dist & OOD \\
\midrule
\rowcolor{black!8}\multicolumn{11}{l}{\emph{Self-distillation}} \\
Eulerian & 0.27{\scriptsize\color{black!55}$\pm$0.06} & \textbf{0.24{\scriptsize\color{black!55}$\pm$0.01}} & \textbf{0.27{\scriptsize\color{black!55}$\pm$0.03}} & 0.30{\scriptsize\color{black!55}$\pm$0.04} & 0.56{\scriptsize\color{black!55}$\pm$0.02} & \textbf{0.50{\scriptsize\color{black!55}$\pm$0.04}} & 0.89{\scriptsize\color{black!55}$\pm$0.07} & \textbf{0.83{\scriptsize\color{black!55}$\pm$0.05}} & 0.50{\scriptsize\color{black!55}$\pm$0.02} & 0.49{\scriptsize\color{black!55}$\pm$0.02} \\
Lagrangian & 0.22{\scriptsize\color{black!55}$\pm$0.03} & 0.31{\scriptsize\color{black!55}$\pm$0.14} & 0.31{\scriptsize\color{black!55}$\pm$0.05} & 0.32{\scriptsize\color{black!55}$\pm$0.06} & 0.56{\scriptsize\color{black!55}$\pm$0.06} & 0.54{\scriptsize\color{black!55}$\pm$0.03} & 0.85{\scriptsize\color{black!55}$\pm$0.15} & 0.87{\scriptsize\color{black!55}$\pm$0.04} & 0.55{\scriptsize\color{black!55}$\pm$0.07} & 0.56{\scriptsize\color{black!55}$\pm$0.02} \\
Eulerian $+$ Lagrangian & \textbf{0.21{\scriptsize\color{black!55}$\pm$0.04}} & 0.31{\scriptsize\color{black!55}$\pm$0.13} & \textbf{0.27{\scriptsize\color{black!55}$\pm$0.04}} & \textbf{0.27{\scriptsize\color{black!55}$\pm$0.03}} & \textbf{0.55{\scriptsize\color{black!55}$\pm$0.03}} & \textbf{0.50{\scriptsize\color{black!55}$\pm$0.02}} & \textbf{0.81{\scriptsize\color{black!55}$\pm$0.05}} & 0.84{\scriptsize\color{black!55}$\pm$0.07} & \textbf{0.49{\scriptsize\color{black!55}$\pm$0.03}} & \textbf{0.46{\scriptsize\color{black!55}$\pm$0.02}} \\
\midrule
\rowcolor{black!8}\multicolumn{11}{l}{\emph{Physics-distillation}} \\
\rowcolor{physhl}Eulerian & 0.14{\scriptsize\color{black!55}$\pm$0.06} & 0.13{\scriptsize\color{black!55}$\pm$0.03} & \textbf{0.15{\scriptsize\color{black!55}$\pm$0.01}} & \textbf{0.13{\scriptsize\color{black!55}$\pm$0.02}} & \textbf{0.45{\scriptsize\color{black!55}$\pm$0.03}} & \textbf{0.44{\scriptsize\color{black!55}$\pm$0.03}} & 0.64{\scriptsize\color{black!55}$\pm$0.07} & \textbf{0.61{\scriptsize\color{black!55}$\pm$0.06}} & 0.31{\scriptsize\color{black!55}$\pm$0.01} & 0.32{\scriptsize\color{black!55}$\pm$0.02} \\
\rowcolor{physhl}Lagrangian & 0.14{\scriptsize\color{black!55}$\pm$0.01} & 0.12{\scriptsize\color{black!55}$\pm$0.02} & \textbf{0.15{\scriptsize\color{black!55}$\pm$0.01}} & 0.17{\scriptsize\color{black!55}$\pm$0.04} & 0.52{\scriptsize\color{black!55}$\pm$0.06} & 0.51{\scriptsize\color{black!55}$\pm$0.04} & 0.62{\scriptsize\color{black!55}$\pm$0.02} & 0.62{\scriptsize\color{black!55}$\pm$0.05} & 0.39{\scriptsize\color{black!55}$\pm$0.03} & 0.37{\scriptsize\color{black!55}$\pm$0.05} \\
\rowcolor{physhl}Eulerian $+$ Lagrangian & \textbf{0.11{\scriptsize\color{black!55}$\pm$0.02}} & \textbf{0.11{\scriptsize\color{black!55}$\pm$0.03}} & 0.16{\scriptsize\color{black!55}$\pm$0.02} & 0.16{\scriptsize\color{black!55}$\pm$0.05} & 0.47{\scriptsize\color{black!55}$\pm$0.06} & 0.50{\scriptsize\color{black!55}$\pm$0.02} & \textbf{0.61{\scriptsize\color{black!55}$\pm$0.05}} & \textbf{0.61{\scriptsize\color{black!55}$\pm$0.02}} & \textbf{0.30{\scriptsize\color{black!55}$\pm$0.04}} & \textbf{0.29{\scriptsize\color{black!55}$\pm$0.02}} \\
\bottomrule
\end{tabular}
}
\end{table}

\FloatBarrier

\FloatBarrier
\clearpage
\section{Related work}
\label{app:related}
\paragraph{Learning dynamics from trajectories.}
Autoregressive models~\citep{li2020fourier, brandstetter2022message, ye2025recurrentneuraloperatorsstable} and flow-map learning methods~\citep{qin2019data} learn a fixed-step update on a uniform time grid and roll it out.
Neural ODEs~\citep{chen2018neural} and CFO~\citep{hou2025cfolearningcontinuoustimepde} learn the instantaneous velocity instead.
CFO fits a spline through each trajectory and matches its time derivative by flow matching, so it trains on irregular time grids without backpropagating through a solver.
Both families still advance in small steps at inference.

\paragraph{Accelerating many-step models.}
Consistency models~\citep{song2023consistencymodels}, shortcut models~\citep{frans2025shortcut}, MeanFlow~\citep{geng2025meanflowsonestepgenerative}, Terminal Velocity Matching~\citep{zhou2026terminal}, and flow maps~\citep{boffi2025flowmapmatchingstochastic, boffi2025consistency} replace many sampling steps with one or a few by learning the map between two generative times.
Spatio-Temporal MeanFlow~\citep{bai2026physics} carries this map into physical time with the governing equations, between absolute start and end times.
Hamiltonian Flow Maps~\citep{ripken2026hamiltonian} instead learn it over the elapsed time from the governing equations alone, in the Eulerian form.
\mbox{Deep-OSG}~\citep{chen2023deeposg}, Consistent Vector Flow~\citep{luo2026recovering}, and a semigroup consistency diagnostic~\citep{shikhman2026semigroup} build on the semigroup property.
We derive all three consistency conditions of flow maps for one time-invariant evolution operator of an autonomous system in physical time, which unifies these elapsed-time works.
We take the velocity from either the model or the governing equations, providing a more general paradigm for distillation and physics-informed learning.

\section{Connection to other methods}
\label{app:connections}
\begingroup
\setlength{\abovedisplayskip}{5pt}\setlength{\belowdisplayskip}{5pt}
\setlength{\abovedisplayshortskip}{3pt}\setlength{\belowdisplayshortskip}{3pt}
Here we connect to previous work on accelerating many-step sampling models into fewer evaluations, including flow maps for generative modeling and Hamiltonian flow maps for molecular dynamics.

\paragraph{Flow maps.}
Flow maps \citep{boffi2025consistency, boffi2025flowmapmatchingstochastic} learn the map $X_{s,t}$ that carries a sample at generative time $s$ to time $t$ along a drift $b_t$, written as $X_{s,t}(x) = x + (t-s)\,v_{s,t}(x)$ through an average velocity, and three conditions characterize it:
\begin{equation}
\left\{
\begin{gathered}
\partial_t X_{s,t}(x) = b_t\big(X_{s,t}(x)\big), \\
\partial_s X_{s,t}(x) + \nabla X_{s,t}(x)\,b_s(x) = 0, \\
X_{u,t}\big(X_{s,u}(x)\big) = X_{s,t}(x).
\end{gathered}
\right.
\label{eq:flowmap-three}
\end{equation}
An autonomous system runs on physical time and is time-invariant: its drift is a fixed field $v$, so $X_{s,t}$ depends on $t-s$ rather than the absolute times $s$ and $t$.
Thus principally, the substitution
\begin{equation}
\left\{
\begin{gathered}
\psi(u,\Delta t) = X_{s,\,s+\Delta t}(u), \\
\bar{v}(u,\Delta t) = v_{s,\,s+\Delta t}(u), \\
b_t \equiv v,
\end{gathered}
\right.
\label{eq:flowmap-sub}
\end{equation}
carries \cref{eq:flowmap-three} onto it: with $\partial_t X_{s,t} = \partial_{\Delta t}\psi$ and $\partial_s X_{s,t} = -\partial_{\Delta t}\psi$, the first two become \cref{eq:lag} and \cref{eq:eul}, and the third becomes \cref{eq:semi}.
Our contribution is to extend flow-map consistency learning to autonomous physical dynamics and explore the design space, with the governing equations providing supervision through the differential consistency conditions.

\paragraph{Hamiltonian flow maps.}
Hamiltonian Flow Maps (HFM) \citep{ripken2026hamiltonian} learn the same object on the phase space of a system with Hamiltonian $\mathcal{H}$, writing the map as $(x,p)^{\mathsf{T}} + \Delta t\,\bar{u}(x,p,\Delta t)$, where the mean displacement field $\bar{u}$ averages $(v,f)^{\mathsf{T}}$ over the elapsed time, and impose the MeanFlow Identity \citep{geng2025meanflowsonestepgenerative}
\begin{equation}
\bar{u}(x,p,\Delta t) - \Delta t\,\tfrac{d}{dt}\bar{u}(x,p,\Delta t) = (v, f)^{\mathsf{T}},
\label{eq:hfm}
\end{equation}
where $v = \dot{x} = \partial\mathcal{H}/\partial p$ is the velocity and $f = \dot{p} = -\partial\mathcal{H}/\partial x$ the force.
Expanding the total derivative along the trajectory as $\tfrac{d}{dt}\bar{u} = D\bar{u}[(v,f)^{\mathsf{T}}] - \partial_{\Delta t}\bar{u}$ turns \cref{eq:hfm} into
\begin{equation}
\bar{u}(x,p,\Delta t) = (v,f)^{\mathsf{T}} + \Delta t\big(D\bar{u}(x,p,\Delta t)[(v,f)^{\mathsf{T}}] - \partial_{\Delta t}\bar{u}(x,p,\Delta t)\big),
\end{equation}
which is \cref{eq:eul} on the state $z = (x,p)^{\mathsf{T}}$ with $\mathcal{F}(z) = (v,f)^{\mathsf{T}}$, so HFM is our Eulerian-phys.
The main difference between the two is the supervision target.
HFM focuses (i) on $(x,p)$, a position and a momentum, and on their time derivatives $(v,f)$, a velocity and a force; (ii) without paired data for a data term.
In our setting, we extend it to general physics $\mathcal{F}$ and learn with a small amount of real paired data from trajectories.
The two supervisions are complementary, since $\mathcal{F}$ fixes the velocity at any state while the trajectories fix the endpoint of each jump.
\endgroup

\FloatBarrier
\clearpage
\section{Qualitative results}
\label{app:qualitative}
The tables report the long-run behavior as single numbers.
\cref{fig:longterm-qual} shows what those numbers measure: the attractor structure each model produces over a $50\Tlyap$ trajectory, plotted as a cloud, and the power spectrum of one of its coordinates.

\begin{figure}[h]
\centering
\includegraphics[width=\linewidth]{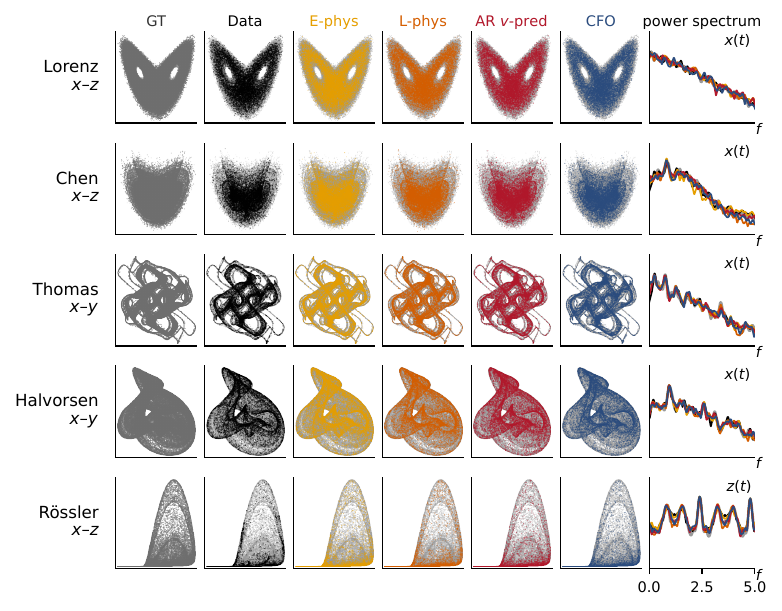}
\caption{\textbf{Long-term statistics on all five systems.}
One column per method on seed $0$, showing the attractor cloud of the $50\Tlyap$ free run with the truth in gray behind it, and the power spectrum of one coordinate on one trajectory.
The frequency $f$ is in units of the Lyapunov rate $\lambda_{\max}$.}
\label{fig:longterm-qual}
\end{figure}

\end{document}